\declaretheorem[name=Lemma]{lemma}
\title{Belief-State Query Policies for User-Aligned POMDPs}
\author{%
  \textbf{Daniel Bramblett} {\normalfont and} \textbf{Siddharth Srivastava} \\
  Autonomous Agents and Intelligent Robots Lab\\
  School of Computing and Augmented Intelligence\\
  Arizona State University, AZ, USA \\
  \texttt{\{drbrambl,siddharths\}@asu.edu} \\
}
\begin{document}

\maketitle

\begin{abstract}
Planning in real-world settings often entails addressing partial observability while aligning with users' requirements. We present a novel framework for expressing users' constraints and preferences about agent behavior in a partially observable setting using parameterized belief-state query (BSQ) policies in the setting of goal-oriented partially observable Markov decision processes (gPOMDPs). We present the first formal analysis of such constraints and prove that while the expected cost function of a parameterized BSQ policy w.r.t its parameters is not convex, it is \emph{piecewise constant} and yields an implicit \emph{discrete parameter search space} that is finite for finite horizons. This theoretical result leads to novel algorithms that optimize gPOMDP agent behavior with guaranteed user alignment. Analysis proves that our algorithms converge to the optimal user-aligned behavior in the limit. Empirical results show that parameterized BSQ policies provide a computationally feasible approach for user-aligned planning in partially observable settings.
\end{abstract}

\section{Introduction}
\label{sec:introduction}
Users of sequential decision-making (SDM) agents in partially observable settings often have requirements and preferences on expected behavior, ranging from safety concerns to 
high-level knowledge of task completion requirements. 
However, users are ill-equipped to specify desired behaviors from such agents. For instance, although
reward engineering can often encode fully observable preferences \citep{NEURIPS2021_a7f0d2b9,NEURIPS2023_a5357781}, it requires 
significant trial-and-error, and can produce unintended behavior even when done by experts 
working on simple domains
~\citep{Booth_Knox_Shah_Niekum_Stone_Allievi_2023}. These challenges are compounded in partially observable environments, where the agent will not know the full state on which the users' requirements and preferences are typically defined. For example,  defining a reward function on the belief state to align the agent's behavior with the user
can result in wireheading \citep{everitt2016avoiding} (see Sec.\,\ref{sec:related} for further discussion on related work). 

Consider a simplified, minimal example designed to  illustrate the key principles (Fig.\,\ref{fig:spaceship_repair_combined}(a)).
A robot located on a spaceship experiences a
communication error with the ship and needs to decide whether to
attempt to  repair  itself or the ship. Importantly, while 
a robot error is harder to detect, the user would rather risk repairing the robot
than repairing the ship, as each repair risks introducing additional failures. In other words, the user may expect the robot to work with the following goals and preferences:
\emph{The objective is to fix the communication channel. First, if there is a ``high'' likelihood that the robot is broken, it should try to repair itself;
otherwise, if there is a ``high'' likelihood that the ship is broken, it should try to repair that.} Such
preferences go beyond preferences in fully observable settings: they use
queries on the current belief state for expressing users' requirements while using the conventional paradigm of stating objectives in terms of the true underlying state. Such a formulation avoids wireheading, allowing users to express their constraints and preferences in partially observable settings.  Although
such constraints on behavior are intuitive and common, they leave a significant
amount of uncertainty to be resolved by the agent: it needs to optimize the threshold values of ``high'' probability  under which each rule would apply while attempting to achieve the objective.


\begin{figure*}
    \centering
    \includegraphics[width=1.0\textwidth]{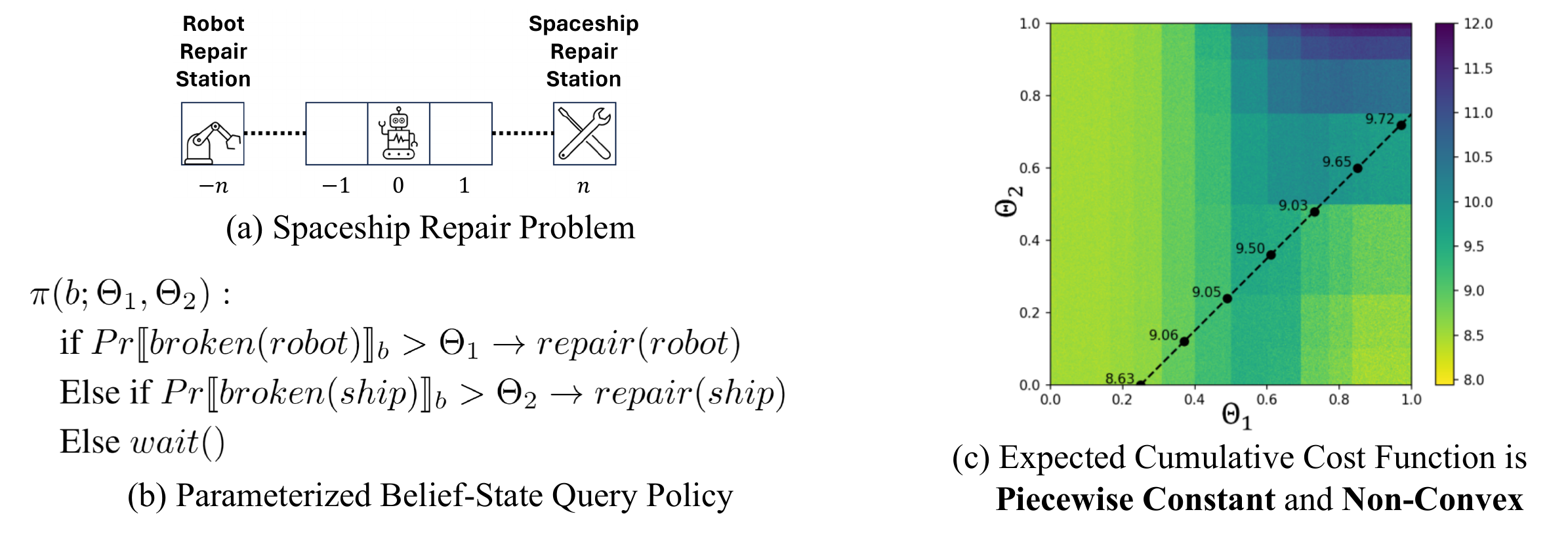}

    \caption{(a) Spaceship Repair running example. (b) parameterized BSQ policy for the user preference from the Introduction. (c) The expected cumulative cost function for (b) with a horizon of 12.}
    \label{fig:spaceship_repair_combined}
\end{figure*}

We introduce mathematical and algorithmic foundations for addressing these problems by defining constraints on behaviors in terms of properties of the belief state, expressed through belief-state queries (BSQs). We prove the surprising result that although the space of possible threshold values in preferences such as the one listed above is uncountably infinite, only a finite number of evaluations are required for computing optimal, user-aligned policies for finite-horizon problems. We use this result to develop a probabilistically complete algorithm for computing optimal constrained policies. Our main contributions are: 
\begin{enumerate}
    \item A framework for encoding user requirements and preferences over agent behavior in goal-oriented partially observable Markov decision processes (Sec.\,\ref{sec:formal}).
    \item Mathematical analysis proving that the expected cost function of a parameterized BSQ policy w.r.t its parameters is piecewise constant but generally non-convex. (Sec.\,\ref{sec:theory}).
    \item A probabilistically complete algorithm for computing optimal user-aligned policies in goal-oriented POMDPs (Sec.\,\ref{sec:algorithms}).
    \item Empirical evaluation on a diverse set of problems showing both the efficiency of our algorithm and the quality of the computed user-aligned policies.
    (Sec.\,\ref{sec:empirical}).
\end{enumerate}

\section{Related Work}
\label{sec:related}


Planning over
preferences has been well studied in fully observable settings
~\citep{baier2007,aguas2016generalized}. 
\citet{NEURIPS2022_70b8505a} present an approach  for
complying with an LTL specification while carrying out reinforcement learning. Other approaches for using LTL specifications use the grounded state to create a reward function to teach reinforcement learning agents \citep{toro2018teaching,pmlr-v139-vaezipoor21a}. These approaches do not extend to partially observable settings as they consider agents that can access the complete state. 

In partially observable settings, existing approaches for using domain knowledge and preferences require extensive, error-prone reward design and/or do not guarantee compliance. LTL specifications have been incorporated either by designing a reward function that incentivizes actions more likely to adhere to these specifications \citep{liu2021leveraging,tuli2022learning} or by imposing a compliance threshold \citep{ahmadi2020stochastic}. In both approaches, the user calibrates rewards for user alignment with those for objective completion; it is difficult to ensure user alignment. We focus on the problem of guaranteeing user alignment without  reward engineering.

\citet{mazzi2021rule,mazzi2023learning} proposed expressing domain control knowledge using belief state probabilities. \citet{mazzi2021rule} used expert-provided rule templates and execution traces to construct a shield to prevent irregular actions. \citet{mazzi2023learning} used execution traces and domain-specified belief-state queries to learn action preconditions over the belief state. Both approaches use input traces and focus on ensuring a policy is consistent with previously observed behavior. We address the complementary problem of computing user-aligned policies without past traces.

Belief-state queries  have been used to solve POMDPs with uniform parameter
sampling~\citep{Srivastava2014FirstOrderOP} but formal analysis, feasibility of
optimizing BSQ policies, and the existence of provably convergent algorithms
have remained open as research questions prior to this work.

\section{Formal Framework}
\label{sec:formal}

This section formally defines the BSQ framework, which expresses user requirements on an agent’s belief and is designed for relational goal-oriented partially observable Markov decision processes. 

\subsection{Goal-Oriented Partially Observable Markov Decision Process}
\label{sec:gPOMDP}

Partially observable Markov decision processes (POMDPs) constitute a standard mathematical framework for modeling SDM problems in partially observable, stochastic settings \citep{kaelbling1998planning,smallwood1973optimal}. 
State-of-the-art POMDP solvers often rely on approximate online approaches \citep{NIPS2010_edfbe1af,somani2013despot} where recent work addresses the problem of 
obtaining performance bounds \citep{NEURIPS2023_fc6bd0ee,lim2023optimality}. 

We use goal-oriented POMDPs (gPOMDPs), where the agent aims to complete one of the tasks/goals. This eliminates the burden of error-prone reward engineering  by using a default cost function that associates a constant cost for each timestep before reaching the goal. 
E.g., the Spaceship Repair problem (Sec.\,\ref{sec:introduction}) has two objects: the robot and the spaceship. A state is defined using a Boolean function $broken(o)$ representing whether object $o$ needs repair and an integer-valued function $rlocation()$ representing the robot's location. 
Both functions are not observable. 
The agent has two types of actions: try to repair object $o$ ($repair(o))$ or wait $(wait())$. A transition function expresses the distribution of $rlocation()$ 
depending on the action taken and the robot's previous location.
At each timestep, the robot receives a noisy observation $obs\_err(o)$ regarding the status of  object $o$. The set of observations can be expressed as $\{obs\_err(robot),obs\_err(ship)\}$.
Due to noisy perception, $obs\_error(o)$ may not match $broken(o)$. An observation function denotes the probability of each observation conditioned on the (hidden) current state. 
The goal is to reach the repair station corresponding to the truly broken component. We define gPOMDPs formally as follows.

\begin{restatable}{definition}{gpomdpdef}
A \emph{goal-oriented partially observable Markov decision process} $\mathcal{P}$ is defined as $\langle \mathcal{C}, \mathcal{F}, \mathcal{A}, \mathcal{O}, \mathcal{T},  \Omega, \mathcal{G}, \emph{Cost},  H,b_0 \rangle$ where $\mathcal{C}$ is the finite set of constant symbols and $\mathcal{F}$ is the finite set of functions. The set of 
state variables for $\mathcal{F}$, $\mathcal{V}_F$, is defined as all instantiations of functions in $\mathcal{F}$ 
with objects in $O$.
The set of states $\mathcal{S}$ is the set of all possible valuations for $\mathcal{V}_F$; $\mathcal{A}$ is a finite set of actions, $\mathcal{O}$ is a subset of $\mathcal{F}$ of observation predicates, $\mathcal{T}: \mathcal{S} \times \mathcal{A} \times \mathcal{S} \rightarrow [0, 1]$ is the transition function $T(s,a,s')=Pr(s'|a,s)$; $\mathcal{G} \subseteq \mathcal{S}$ is the set of goal states that are also sink states, $\Omega : \mathcal{S} \times \mathcal{A} \times \mathcal{O} \rightarrow [0, 1]$ is the observation function; $\Omega(s,a,o)=Pr(o|s,a)$, $\emph{Cost}(s)=\{0$ if $s\in\mathcal{G}$;else $1\}$ is the cost function, $H$ is the horizon, and $b_0$ is the initial belief state. A \emph{solution for a gPOMDP} is a policy that has a non-zero probability of reaching $\mathcal{G}$ in $H-1$ timesteps.
\label{def:gpomdp}
\end{restatable}

\subsection{Belief-State Queries and Policies}
\label{sec:bsq}

Computing a policy for any gPOMDP requires planning around state uncertainty. This is done  using the concept of a \emph{belief state}, which is a probability distribution over the currently possible states. Formally, the belief state constitutes a 
 sufficient statistic for observation-action histories \citep{astrom1965optimal}. We express user requirements using queries on the current belief state.
 
For any belief state $b$, when action $a$ is taken and observation $o$ is observed, the updated belief state is computed using $b^\prime(s^\prime)=\alpha \Omega(s^\prime,a,o)\sum_{s}\mathcal{T}(s,a,s^\prime)b(s)$ where $\alpha$ is the normalization factor. We refer to this belief propagation  as $b^\prime = bp(b,a,o)$. We extend the notation to refer to the sequential application of this equation to arbitrary bounded histories as  $bp^*(b_0,a_1,o_1,...,a_n,o_n) = bp(\ldots bp(bp(b_0, a_1, o_1), a_2, o_2)\ldots)$.

For example, the Spaceship Repair problem user preference  has the expression \emph{``a high likelihood that the robot is broken''}. This can be expressed as a query on 
a belief state $b$: $Pr \llbracket broken(robot)\rrbracket_b > \Theta_{rob}$ where $\Theta_{rob}$ is a parameter. If $rlocation()$ is fully observable, the expression \emph{``the robot  location is smaller than $\Theta_l$ in a belief state $b$''} can be expressed as $Pr \llbracket rlocation() < \Theta_l \rrbracket_b == 1$. 
We can combine both queries to express \emph{``a high likelihood the robot is broken and its location is lower than $\Theta_l$''}, as: $Pr \llbracket broken(robot)\rrbracket_b > \Theta_{rob} \wedge Pr \llbracket rlocation() < \Theta_l \rrbracket_b == 1$.

Formally, BSQs use the vocabulary of the underlying gPOMDP. There are two types of queries we can ask: (1) whether formula $\varphi$ is true with a probability that satisfies a threshold $\Theta$; (2) whether the fully observable portion of the state satisfies a formula $\varphi$ containing a threshold $\Theta$. These thresholds represent the parameters of a parameterized BSQ policy. The agent must optimize these parameters to achieve the goal while aligning with the user's requirements.
BSQs can be combined using conjunctions or disjunctions to express more complex requirements, which we define as a compound BSQ in Def.\, \ref{def:comp_bsq}. We omit subscripts when clear from context.

\begin{restatable}{definition}{bsqdef}
    A \emph{belief-state query} $\lambda_\mathcal{P}(b;\varphi,\circ,\Theta)$, where $b$ is a belief state, $\varphi$ is a first-order logic formula composed of functions in gPOMDP $\mathcal{P}$, $\circ$ is any comparison operator, and $\Theta\in\mathbb{R}$ is a parameter, is defined as $\lambda_\mathcal{P}(b;\varphi,\circ,\Theta)=Pr \llbracket\varphi\rrbracket_b \circ \Theta$.
    \label{def:bsq}
\end{restatable}


\begin{restatable}{definition}{compbsqdef}
    \label{def:comp_bsq}
    A \emph{compound BSQ} $\Psi(b;\overline{\Theta})$, where b is a belief state and $\overline{\Theta}\in\mathbb{R}^n$ 
    , is either a conjunction or a disjunction of BSQs that contain $n$ total parameters.
\end{restatable}

\label{sec:bsq_pref}

We use BSQs to formally express user requirements of the form discussed in the introduction by mapping BSQs with variable parameters to actions.
Fig.\,\ref{fig:spaceship_repair_combined}(b) illustrates this with a parameterized BSQ policy for the Spaceship Repair problem. Formally,



\begin{restatable}{definition}{bsqprefdef}
Let $b$ be a belief state and $\overline{\Theta}$ be  a tuple of $n$ parameter variables over $\mathbb{R}$. An n-parameter \emph{Parameterized Belief-State Query policy} $\pi(b,\overline{\Theta})$ is a tuple of rules $\{r_1,...,r_m\}$ where each $r_i= \Psi_i\rightarrow a_i$ is composed of a compound BSQ $\Psi_i$ and an action $a_i\in\mathcal{A}$. The set  $\{\Psi_1,...,\Psi_m\}$ is mutually exclusive and covers the n-dimensional parameter space $\mathbb{R}^n$.
\label{def:bsq_pref}
\end{restatable}

In practice, mutually exclusive coverage is easily achieved using an \emph{if... then...  else} structure, where each condition includes a conjunction of the negation of preceding conditions and the list of rules includes a terminal \emph{else} with the catchall BSQ \emph{True} (Fig.\,\ref{fig:spaceship_repair_combined}).
Any assignment of values $\overline{\vartheta}\in\mathbb{R}^n$ to the parameters $\overline{\Theta}$ of a parameterized BSQ policy produces an executable policy that maps every possible belief state to an action:

\begin{restatable}{definition}{bsqpolicydef}
A \emph{BSQ policy} $\pi(b,\overline{\vartheta})$ is a parameterized BSQ policy $\pi(b,\overline{\Theta})$ with an assignment in $\mathbb{R}$ to each of the $n$ parameters $\overline{\Theta}$.
\label{def:bsq_policy}
\end{restatable}

Let $Pr^\pi_t(\mathcal{G})$ be the probability that an execution of a policy $\pi$ reaches a state in $\mathcal{G}$ within $t$ timesteps. A BSQ policy $\pi(b,\overline{\vartheta})$ is said to be \emph{a solution to a gPOMDP} with goal $\mathcal{G}$ and horizon $H$ iff $Pr^{\pi(b,\overline{\vartheta})}_{H-1}(\mathcal{G})>0$. The quality of a BSQ policy is defined as its expected cost;  due to the uniform cost function in the definition of gPOMDPs, the expected cost of a BSQ policy is the expected time taken to reach a goal state. Formally, the \emph{expected cost of a BSQ} policy $\pi(b,\overline{\vartheta})$ is  $E_\pi(\overline{\vartheta};H) = \sum_{t=1}^H t\times Pr_{\mathcal{G},t}[\pi(b,\overline{\vartheta})]$, where $H$ is the horizon and $Pr_{\mathcal{G},t}[\pi(b,\overline{\vartheta})]$ is the probability of policy $\pi(b,\overline{\vartheta})$ reaching a goal state for the first time at timestep $t$.
Thus, given a gPOMDP $\mathcal{P}$, with goal $\mathcal{G}$, and a parameterized BSQ policy $\pi(b,\overline{\Theta})$, the objective is to  compute:

\begin{equation*}
    \overline{\vartheta}^* = \emph{argmin}_{\overline{\vartheta}} \{ E_\pi(\overline{\vartheta};H): Pr^{\pi(b, \overline{\vartheta})}_{H-1}(\mathcal{G}) >0   \}
\end{equation*}


\section{Formal Analysis}
\label{sec:theory}
Our main theoretical result is that the continuous space of policy parameters is, in fact, partitioned into finitely many constant-valued convex sets. This insight allows the development of scalable algorithms for computing low-cost user-aligned policies.
We introduce formal concepts and key steps in proving this result here; complete   proofs  for all results are available in the Appendix. We begin with the notion of strategy trees to conceptualize the search process for BSQ policies.



\subsection{Strategy Trees}
\label{sec:strate_tree_features}

\begin{figure}[t]
    \centering
    \includegraphics[width=1\textwidth]{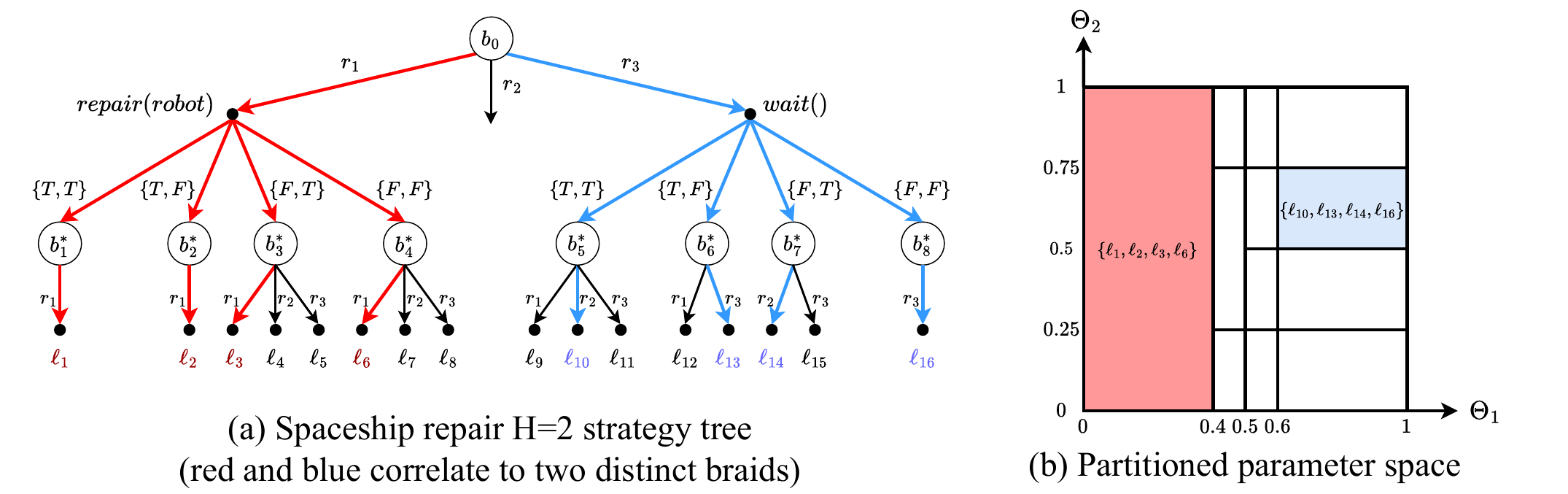}
    \caption{(a) Strategy tree created from parameterized BSQ policy in Fig.\,\ref{fig:spaceship_repair_combined} and Spaceship Repair gPOMDP with horizon of 2. (b) Complete partitions of parameter space with two of the braids highlighted. Error detection sensor accuracy for the robot and ship is 60\% and 75\%, respectively.}
    \label{fig:game_tree_visual}
\end{figure}


Every parameterized BSQ policy $\pi(b,\overline{\Theta})$ and gPOMDP $\mathcal{P}$ defines a strategy tree (e.g., Fig.\,\ref{fig:game_tree_visual}(a)) that captures the possible decisions at each execution step. Intuitively, the tree starts at a belief node representing the initial belief state. Outgoing edges from belief nodes represent rule selection in $\pi(b,\overline{\Theta})$, resulting in action nodes. Outgoing edges from action nodes represent possible observations, leading to belief nodes representing the corresponding updated belief. If the tree is truncated at horizon $H$,  each leaf represents the outcome of a unique trajectory of rules from $\pi(b,\overline{\Theta})$  and observations.

Each belief node represents a  belief state that can be calculated using the rule-observation trajectory  leading to that node. A labeling function $l:V_B \cup V_A \rightarrow B\cup A $ maps the set of belief nodes  $V_B$ to belief states in $B$ and the set of action nodes $V_A$ to actions in $\mathcal{A}$. For ease of notation we define  $b^*_i=l(v_i)$ for all belief nodes $v_i\in V_B$ and $a^*_j=l(v_j)$ for all action nodes $v_j\in V_A$. 


\begin{restatable}{definition}{strattreedef}
Let $\mathcal{P}$ be a gPOMDP,  $\pi(b,\overline{\Theta})$ be a parameterized BSQ policy for $\mathcal{P}$, and  $b_0$ be the initial belief state. The \emph{strategy tree} $\mathcal{T}_\pi(b_o)$  is defined as $\mathcal{T}_\pi(b_o) = \langle V, E \rangle$ where set $V = V_B \cup V_A$ contains belief nodes $V_B$ and action nodes $V_A$, whereas, set $E = E_B \cup E_A$ contains edges from belief nodes to action nodes ($E_b \subseteq V_B \times V_A$) and edges from action nodes to belief nodes ($E_A \subseteq V_A \times V_B$). $E_{B}$ is defined as $ \{ (v_i, r, v_j) | v_i\in V_B,v_j\in V_A, r \in \pi(b,\overline{\Theta}), \emph{ and } \exists \Psi: r=\Psi\rightarrow a^*_j \} $. 
 $E_a$ is defined as  $\{ (v_m, o, v_n) | v_m \in V_A,v_n \in V_B,o \in \mathcal{O},
 \exists (v_{p},r=\Psi\rightarrow a,v_m) \in E_b; b^*_n = \textrm{bp}(b^*_p, a,o) \}$. 
\label{def:strat_tree}
\end{restatable}

\paragraph{Non-convexity of the expected cost function} Each parameterized BSQ policy permits infinitely many BSQ policies, one for each assignment of real values to its parameters. Unfortunately, the expected cost of parameterized BSQ policies is not a convex function of these parameters. Fig.\,\ref{fig:spaceship_repair_combined}(c) shows this with a counterexample using the parameterized BSQ policy from Fig.\,\ref{fig:spaceship_repair_combined}(b), a horizon of 12, and setting the robot's initial distance from each repair station to 5. This plot was constructed by sampling the expected cost for 251,001 equally-spaced parameter assignments to the Fig.\,\ref{fig:spaceship_repair_combined}(a) parameterized BSQ policy.  $E_\pi(\overline{\vartheta}; H)$ is clearly not convex: the expected cost along the line $\Theta_2=\Theta_1-0.25$ has two inflection points at $\Theta_1 = 0.6$ and $\Theta_1 = 0.8$. This creates two local minima: $\Theta_1 \leq 0.16$ and $\Theta_1 \geq0.83\wedge\Theta_2\leq0.1$.  Intuitively, this is due to the short horizon, which causes the optimal strategy to be selecting a repair station and traversing to it regardless of the observations. 
This complicates finding good BSQ policies using existing solvers. However, every possible BSQ policy can be associated with a set of strategy tree leaves that are reachable under that policy. Thus, for a given horizon, there are only finitely many expected costs for BSQ policies for a given problem. 

The main challenge in computing good BSQ policies is that the set of possible BSQ policies with distinct expected costs grows exponentially with the horizon and good BSQ parameters could be distributed arbitrarily in the high-dimensional, continuous space of parameter values. We use strategy trees to define groups of leaves called braids, which we will then use to prove that the space of BSQ policy parameters turns out to be well-structured in terms of the expected cost function.

\paragraph{Braids} 
We refer to the set of all leaves reachable under a policy $\pi(b,\overline{\vartheta})$  as \emph{the braid of $\overline{\vartheta}$}.
Due to the mutual exclusivity of rules for every assignment of parameter values to a parameterized BSQ policy, at most, one outgoing edge can be taken from  each belief node (as these correspond to the rules and actions). However, the stochasticity of dynamics and observations allows for multiple outgoing edges to be possible from action nodes.   
E.g., in the strategy tree for the Spaceship Repair problem (Fig.\,\ref{fig:game_tree_visual}(a)),   leaves $\ell_{2}$ and $\ell_{10}$ cannot both be reachable under a BSQ policy because that would require rules $r_1$ and $r_2$ to be satisfied at the same belief. However, both $\ell_1$ and $\ell_5$ may be reachable under the same BSQ policy since their paths diverge on an action node. 
Formally,
\begin{restatable}{definition}{braiddef}
Let 
$H$ be the horizon, and let $\pi(b,\overline{\Theta})$ be a parameterized BSQ policy for a gPOMDP $\mathcal{P}$. The \emph{braid of a parameter assignment $\overline{\vartheta}$}, $braid_{\pi,H}(\overline{\vartheta})$, is the set of all leaves in strategy tree $\mathcal{T}_\pi(b_0)$ rooted at the initial belief $b_0$ that can be reached while executing $\pi(b, \overline{\vartheta})$: $braid_{\pi,H}(\overline{\vartheta})=\{\ell_H : \textrm{ the path to } \ell_H \textrm{ is } (r_1,o_1,...,r_H,o_H); \forall i\; r_i = \Psi_i \rightarrow a_i,\textrm{ }b_i =bp^*(b_0,r_1,o_1,...,r_i,o_i) \emph{ and }\overline{\vartheta}\textrm{  satisfies } \Psi_i$.
\label{def:braid}
\end{restatable}


The unique interval of parameter values where a leaf is reachable can be calculated by taking the intersection of the parameter intervals needed to satisfy each rule on the path to that leaf. This is because for any compound BSQ $\Psi$, we can compute the unique interval of parameter values $I(\Psi)$ under which $b$ will satisfy $I(\Psi)$ by substituting each BSQ in $\Psi$ with its corresponding inequality:
\begin{restatable}{lemma}{beliefparameval}
Let $\Psi(b;\overline{\Theta})$ be an n-dimensional compound BSQ.
There exists a set of intervals $I(\Psi) \subseteq \mathbb{R}^n$ s.t. 
$\Psi(b;\overline{\Theta})$ evaluates to true iff $\overline{\Theta}\in I(\Psi)$.
\label{lemma:belief_param_eval}
\end{restatable}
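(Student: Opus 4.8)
The plan is to prove the lemma by a structural case analysis on the compound BSQ $\Psi$, exploiting the fact that the belief state $b$ is held \emph{fixed} throughout. The crucial observation is that once $b$ is fixed, every probability term $Pr\llbracket\varphi\rrbracket_b$ occurring in $\Psi$ evaluates to a fixed real number in $[0,1]$, so each atomic BSQ degenerates from a probabilistic statement into a constraint on a single real parameter. Since a compound BSQ is a conjunction or disjunction of atomic BSQs (Def.\,\ref{def:comp_bsq}), it suffices to establish the claim for a single BSQ (the base case) and then show it is preserved under $\wedge$ and $\vee$; nesting, if permitted, is handled by the same two steps applied inductively.

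For the base case I would show that a single BSQ $\lambda_\mathcal{P}(b;\varphi,\Theta)=Pr\llbracket\varphi\rrbracket_b \circ \Theta$ carves out an interval in its parameter $\Theta$. For a type-(1) query, $\varphi$ does not mention $\Theta$, so $p := Pr\llbracket\varphi\rrbracket_b$ is a constant and the condition $p \circ \Theta$ is a one-variable (in)equality; case-splitting on $\circ \in \{<,\leq,>,\geq,==\}$ yields a half-line or a single point, i.e.\ an interval $J \subseteq \mathbb{R}$. For a type-(2) query the threshold $\Theta$ sits inside $\varphi$, but only over the fully observable portion of the state; because the agent observes those variables exactly, $b$ assigns all its mass to states that agree on their values, so $Pr\llbracket\varphi\rrbracket_b \in \{0,1\}$ and the condition $Pr\llbracket\varphi\rrbracket_b == 1$ reduces to the deterministic comparison inside $\varphi$ evaluated against the fixed observed value, again producing an interval $J$ in $\Theta$. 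Lifting to the $n$-dimensional parameter space, the satisfying set of a BSQ on parameter $\Theta_j$ is the axis-aligned slab $\mathbb{R}^{j-1}\times J \times \mathbb{R}^{n-j}$, i.e.\ a Cartesian product of intervals.

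For the combination step I would treat the two shapes a compound BSQ can take. If $\Psi = \lambda_1 \wedge \cdots \wedge \lambda_k$, then $I(\Psi) = \bigcap_{i} I(\lambda_i)$; since the slabs are intersected coordinate-wise and the intersection of intervals on any shared axis is again an interval, $I(\Psi)$ is a single axis-aligned box $\prod_{j=1}^{n} J_j$. If $\Psi = \lambda_1 \vee \cdots \vee \lambda_k$, then $I(\Psi) = \bigcup_{i} I(\lambda_i)$ is a finite union of such boxes. In either case $\overline{\Theta} \in I(\Psi)$ holds exactly when every constituent interval constraint is met (for $\wedge$) or at least one is (for $\vee$), which by construction is equivalent to $\Psi(b;\overline{\Theta})$ evaluating to true; this establishes the existence of $I(\Psi)$ as a finite union of $n$-dimensional intervals.

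I expect the only genuine subtlety — rather than a hard obstacle — to be the uniform treatment of the two query types, and in particular making precise that a type-(2) threshold embedded in a fully observable formula forces $Pr\llbracket\varphi\rrbracket_b$ to be $\{0,1\}$-valued and hence yields an interval constraint; this step rests squarely on the point-mass structure of the belief over observable variables. The remaining delicate points are the equality operator and, if admitted, the disequality operator, which produce degenerate (single-point) or disconnected (two-interval) solution sets; these are routine case checks that remain consistent with the interpretation of $I(\Psi)$ as a finite union of axis-aligned intervals.
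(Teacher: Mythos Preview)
Your proposal is correct and follows essentially the same approach as the paper: fix $b$, reduce each atomic BSQ to an inequality in a single parameter (treating the two query types separately, with the fully observable case collapsing to a deterministic comparison), then combine under $\wedge$/$\vee$ to obtain $I(\Psi)$. Your write-up is in fact more explicit than the paper's on the lift to $\mathbb{R}^n$ via axis-aligned slabs and on the shape of $I(\Psi)$ under disjunction, but the underlying argument is the same.
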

We can utilize this result to compute the unique interval of parameter values consistent with a braid by taking the intersection of the intervals of each leaf contained in that braid (Def.\,\ref{def:leaf_interval}):

\begin{restatable}{definition}{leafintervaldef}
Let $\pi(b,\overline{\Theta})$ be a parameterized BSQ policy, $\mathcal{P}$ be a gPOMDP, $b_0$ be the initial belief state, and $H$ be the horizon. The \emph{interval of leaf $\ell$}, $I(\ell)$, is defined as the intersection of intervals $\bigcap_i I(\Psi_i)$ of the conditions of each rule $r_i$ that occurs in the path to that leaf. 
The interval for a set of leaves $L$ is defined as $I(L)=\bigcap_{\ell_i \in L} I(\ell_i)$.
\label{def:leaf_interval}
\end{restatable}
Any leaf or braid with an empty parameter interval does not align with the user's requirements. For example, in Fig.\,\ref{fig:game_tree_visual}, note that $r_1$ is the only rule satisfiable if $r_1$ is selected from $b_0$ and the robot is observed to be broken. Using the Fig.\,\ref{fig:spaceship_repair_combined}(b) policy and assuming the sensor accuracy is 60\%, picking a rule other than $r_1$ implies that 50\% likelihood was high enough to fix the robot yet 60\% was not, which is a contradiction. Removing misaligned leaves and braids prunes the tree.


\subsection{BSQ Policies are Piecewise Constant }
\label{sec:piecewise_const}

We now use the concept of braids to prove that the continuous, high-dimensional space of parameter values of a parameterized BSQ policy reduces to a finite set of contiguous, convex partitions with each partition having a constant expected cost. This surprising result implies that although the expected cost of BSQ policies is not a convex function of parameter assignments, optimizing a parameterized BSQ policy requires optimization over a finite set rather than over a continuous space. We first define a notion of similarity over assignments to parameterized BSQ policies that define BSQ policies:




\begin{restatable}{definition}{equivdef}
Let $\pi(b, \overline{\Theta})$ be a parameterized BSQ policy, $\mathcal{P}$ be a gPOMDP, and $H$ be the horizon. Two assignments $ \overline{\vartheta}_1,\overline{\vartheta}_2 \in \overline{\Theta}$ are said to be similar, $\overline{\vartheta}_1 \equiv_H \overline{\vartheta}_2$, iff $braid_{\pi,H}(\overline{\vartheta}_1) = braid_{\pi,H}(\overline{\vartheta}_2)$.
\label{def:equiv}
\end{restatable}
It is trivial to show $\equiv_H$ is transitive, symmetric, and reflexive, making it an equivalence relation over $\mathbb{R}^n$. As such, $\equiv_H$ defines a partition over the same space:

\begin{restatable}{thm}{equivpartitions}
Let $\pi(b,\overline{\Theta})$ be a parameterized BSQ policy, $\mathcal{P}$ be a gPOMDP, $b_0$ be the initial belief state, and $H$ be the horizon. The operator $\equiv_H$ partitions $\mathbb{R}^n$.
\label{theorem:equiv_partitions}
\end{restatable}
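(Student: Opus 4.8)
The plan is to recognize $\equiv_H$ as the fiber relation (kernel) of the braid map and then invoke the standard fact that any such relation is an equivalence relation inducing a partition of its domain. The only genuine content is verifying that the braid map is \emph{single-valued}, i.e., that each parameter assignment determines exactly one braid; the rest is routine set theory.

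First I would establish well-definedness. Fix an assignment $\overline{\vartheta} \in \mathbb{R}^n$. By the mutual-exclusivity-and-coverage requirement on the rule conditions $\{\Psi_1,\dots,\Psi_m\}$ in Def.\,\ref{def:bsq_pref}, at every belief node $v$ of the strategy tree exactly one compound BSQ $\Psi_i$ is satisfied by $\overline{\vartheta}$ at the belief $b^*_v$ labeling $v$, so exactly one outgoing rule-edge is selected. Observation edges from action nodes are not governed by $\overline{\vartheta}$ and are all retained. Consequently the set of leaves reachable under $\pi(b,\overline{\vartheta})$ is uniquely and deterministically pinned down, so the map $\overline{\vartheta} \mapsto \mathit{braid}_{\pi,H}(\overline{\vartheta})$ is a well-defined function $\beta$ from $\mathbb{R}^n$ into the (finite) power set of leaves of $\mathcal{T}_\pi(b_0)$.

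Next I would observe that, by Def.\,\ref{def:equiv}, $\overline{\vartheta}_1 \equiv_H \overline{\vartheta}_2$ holds exactly when $\beta(\overline{\vartheta}_1)=\beta(\overline{\vartheta}_2)$; that is, $\equiv_H$ is precisely the kernel of $\beta$. Equality of images is reflexive, symmetric, and transitive, so $\equiv_H$ inherits these three properties directly, making it an equivalence relation on $\mathbb{R}^n$. I would then close by appealing to the fundamental correspondence between equivalence relations and partitions: the equivalence classes $[\overline{\vartheta}]_{\equiv_H}$ are nonempty (each contains its representative), pairwise disjoint (two classes sharing a point would, by transitivity, coincide), and cover $\mathbb{R}^n$ (every point lies in its own class). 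Hence the classes form a partition of $\mathbb{R}^n$, which is the claim.

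I do not anticipate a serious obstacle: the theorem is, in effect, the standard statement that a function's fibers partition its domain. The one point demanding care is the well-definedness argument of the first step, since the braid is defined as a set of reachable leaves (Def.\,\ref{def:braid}) and one must confirm that this set is uniquely determined rather than merely one of several admissible sets; this is exactly where the mutual-exclusivity-and-coverage hypothesis of Def.\,\ref{def:bsq_pref} is indispensable.
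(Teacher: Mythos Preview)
Your proposal is correct and takes essentially the same approach as the paper: both reduce the claim to verifying that the braid map is single-valued (the paper does this inside its reflexivity argument, you do it upfront) and then observe that equality of braids is trivially reflexive, symmetric, and transitive. Your framing via the kernel of $\beta$ is a cleaner packaging of the same content, but the substantive step---invoking mutual exclusivity and coverage from Def.\,\ref{def:bsq_pref} to pin down a unique braid---is identical.
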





However, this result is not sufficient to define the structure of partitions induced in 
$\mathbb{R}^n$, which will be required for an efficient optimization algorithm.
Based on Sec. \ref{sec:strate_tree_features}, we know that leaves whose trajectories diverge due to different rules must not be in the same braid. Furthermore, a belief state can only lead to one set of possible observations for an action regardless of the BSQ policy being followed. Intuitively, this prevents braids from being proper subsets of each other, which implies that the parameter intervals for two braids can never have overlapping parameter intervals. This gives us the desired structure for partitions induced in the space of parameter values for parameterized BSQ policies: there are parameter intervals corresponding to distinct braids in the policy tree. In other words, the set of braids partitions the parameter space into contiguous, high-dimensional intervals. This can be proved formally and stated as follows:

\begin{restatable}{thm}{disjointint}
Let $\pi(b,\overline{\Theta})$ be a parameterized BSQ policy, $b_0$ be the initial belief state, and $H$ be the horizon. Each partition $\rho$ created by operator $\equiv_H$ partitioning $\mathbb{R}^n$ is the disjoint intervals, $\rho \subseteq \mathbb{R}^n$ where $\forall \overline{\vartheta}  \in \rho$, $braid_{\pi,H}(\overline{\vartheta}) = L$ where $L$ is a fixed set of leaves.
\label{theorem:disjoint_int}
\end{restatable}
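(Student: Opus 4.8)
The plan is to show that each equivalence class $\rho$ of $\equiv_H$ coincides exactly with the leaf-set interval $I(L)$ of its common braid $L$, that $I(L)$ is an axis-aligned box, and that distinct classes are disjoint (the last being immediate once $\rho=I(L)$, since $\equiv_H$ is an equivalence relation by Thm.~\ref{theorem:equiv_partitions}). The first reduction is to read off from Def.~\ref{def:braid}, Lemma~\ref{lemma:belief_param_eval}, and Def.~\ref{def:leaf_interval} that a leaf $\ell$ lies in $braid_{\pi,H}(\overline{\vartheta})$ precisely when $\overline{\vartheta}$ satisfies every rule condition $\Psi_i$ on the path to $\ell$, i.e.\ iff $\overline{\vartheta}\in\bigcap_i I(\Psi_i)=I(\ell)$. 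Hence $braid_{\pi,H}(\overline{\vartheta})=\{\ell : \overline{\vartheta}\in I(\ell)\}$, so fixing a class with braid $L$, every member satisfies $\overline{\vartheta}\in\bigcap_{\ell\in L}I(\ell)=I(L)$, which already gives $\rho\subseteq I(L)$.

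The crux is the reverse inclusion $I(L)\subseteq\rho$, for which I would first establish the structural lemma that \emph{no braid is a proper subset of another}. Suppose for contradiction $braid_{\pi,H}(\overline{\vartheta}_1)=L_1\subsetneq L_2=braid_{\pi,H}(\overline{\vartheta}_2)$, and pick $\ell\in L_2\setminus L_1$ with path $(r_1,o_1,\dots,r_H,o_H)$. Walking down this path, let $v$ be the first belief node at which $\overline{\vartheta}_1$ does not select the path rule $r_v$. Because the two assignments agree on all rules strictly before $v$, and because $bp$ depends only on the chosen action and the observed $o_i$ while the observation branching out of an action node is policy-independent, the belief states along the prefix are identical and $v$ is reached under $\overline{\vartheta}_1$ (and under $\overline{\vartheta}_2$, since $v$ lies on the path to $\ell\in L_2$). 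By mutual exclusivity of rules (Def.~\ref{def:bsq_pref}), $\overline{\vartheta}_1$ fires a unique rule $r'\neq r_v$ at $v$, so every $L_1$-leaf through $v$ carries $r'$ at that position whereas every $L_2$-leaf through $v$ carries $r_v$; since leaves are individuated by their rule--observation paths, these two leaf sets are disjoint. As $v$ is reached under $\overline{\vartheta}_1$, some $L_1$-leaf passes through $v$, and it cannot lie in $L_2$, contradicting $L_1\subseteq L_2$. With the lemma in hand, if $\overline{\vartheta}\in I(L)$ then all $\ell\in L$ are reachable, so $L\subseteq braid_{\pi,H}(\overline{\vartheta})$; the lemma forbids proper containment, forcing $braid_{\pi,H}(\overline{\vartheta})=L$ and $\overline{\vartheta}\in\rho$. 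Thus $\rho=I(L)$, and the same lemma shows that two distinct braids have disjoint intervals.

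Finally, I would argue that $I(L)$ is an interval. By Lemma~\ref{lemma:belief_param_eval}, at a fixed belief node the value $Pr\llbracket\varphi\rrbracket_b$ is a constant scalar, so each constituent BSQ $Pr\llbracket\varphi\rrbracket_b\circ\Theta$ constrains a single coordinate $\Theta$ to a half-line; a rule condition is therefore an axis-aligned box in $\mathbb{R}^n$, and by Def.~\ref{def:leaf_interval} each $I(\ell)$, and hence $I(L)=\bigcap_{\ell\in L}I(\ell)$, is a finite intersection of such boxes, i.e.\ again a box, which is convex and contiguous. The step I expect to be the main obstacle is the proper-subset lemma: it requires carefully certifying that agreement on the rule prefix forces identical beliefs (so that the divergence node $v$ is well-defined and simultaneously reachable under both assignments), and that it is exactly the policy-independence of the observation branching that keeps the two subtrees comparable. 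The delicate sub-case is when the diverging rules $r_v$ and $r'$ share the same action; this is resolved by the observation that leaves are distinguished by their rule labels along the path, not merely by the beliefs they induce, so the two through-$v$ leaf sets remain disjoint regardless.
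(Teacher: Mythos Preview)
Your argument is essentially the paper's: the no-proper-subset claim you prove inline is exactly the paper's Lemma~\ref{lemma_subsets}, and your derivation of $\rho=I(L)$ from it mirrors the paper's proof of the theorem. The paper additionally routes through a separate non-overlap result (Lemma~\ref{lemma:no_overlap}), but your organization shows that the proper-subset lemma alone already forces $braid_{\pi,H}(\overline{\vartheta})=L$ for every $\overline{\vartheta}\in I(L)$, which is a mild streamlining. Your handling of the same-action sub-case via rule labels on the path is sound and matches how the paper individuates leaves in Def.~\ref{def:braid} and in its proof of Lemma~\ref{lemma_subsets}.

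The one overreach is your final paragraph asserting that $I(L)$ is an axis-aligned box. By Def.~\ref{def:comp_bsq} a compound BSQ may be a \emph{disjunction} of BSQs, and the if-then-else encoding of mutual exclusivity (the remark following Def.~\ref{def:bsq_pref}) injects negations of earlier conditions into later ones, so even when every user-written clause is conjunctive, a rule condition $\Psi_i$ generally contains disjunctions; then $I(\Psi_i)$ is a union of coordinate slabs rather than a box, and neither $I(\ell)$ nor $I(L)$ need be convex. This is not a defect relative to the paper---its own proof stops at $\rho=I(L)$ and leans on the informal use of ``interval'' from Lemma~\ref{lemma:belief_param_eval} without establishing convexity---but your attempted sharpening of the geometric claim does not go through as written.
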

 Since each partition corresponds to a braid and each braid corresponds to a fixed set of leaves, which defines the expected cost for all policies corresponding to that braid, all policies defined by a partition of the parameter space have a constant expected cost.
As such, the domain of the expected cost function $E_\pi(\overline{\vartheta};H)$ for gPOMDP $\mathcal{P}$ can be represented as the disjoint intervals of each braid partition. Thus, $E_\pi(\overline{\vartheta};H)$ is piecewise constant. The following result formalizes this.



\begin{restatable}{thm}{piecewiseconstant}
Let $\pi(b,\overline{\Theta})$ be a parameterized BSQ policy, $\mathcal{P}$ be a gPOMDP, $b_0$ be the initial belief state, and $H$ be the horizon. Each partition created by $\equiv_H$ on $\mathbb{R}^n$ has a constant expected cost.
\label{theorem:piecewise_constant}
\end{restatable}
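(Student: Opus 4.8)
The plan is to show that the expected cost $E_\pi(\overline{\vartheta};H)$ depends on $\overline{\vartheta}$ only through its braid $braid_{\pi,H}(\overline{\vartheta})$; since all assignments in a single $\equiv_H$-partition share the same braid by Def.~\ref{def:equiv} (and by Thm.~\ref{theorem:disjoint_int} this braid is a fixed leaf set $L$), constancy on each partition then follows immediately. Fix a partition $\rho$ and let $L = braid_{\pi,H}(\overline{\vartheta})$ be the common braid of every $\overline{\vartheta}\in\rho$. I would first rewrite the first-passage probabilities $Pr_{\mathcal{G},t}[\pi(b,\overline{\vartheta})]$ as a sum over leaves: each leaf $\ell\in L$ corresponds to a unique rule--observation trajectory $(r_1,o_1,\dots,r_H,o_H)$, and because goal states are sink states, the first timestep $t(\ell)$ at which this trajectory enters $\mathcal{G}$ is a well-defined property of the path alone (leaves whose trajectory never reaches $\mathcal{G}$ within $H$ contribute zero). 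Grouping leaves by $t(\ell)$ gives $E_\pi(\overline{\vartheta};H)=\sum_{\ell\in L} t(\ell)\,P(\ell)$, where $P(\ell)$ is the probability of reaching $\ell$ under $\pi(b,\overline{\vartheta})$.

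The key step is to argue that $P(\ell)$ is independent of the particular $\overline{\vartheta}\in\rho$, which I would do by exploiting the two distinct kinds of branching in the strategy tree. At a belief node the outgoing edge is a rule selection, which is deterministic once the parameters are fixed: since $\ell\in L$, mutual exclusivity forces $r_i$ to be the unique rule satisfied at $b_{i-1}=bp^*(b_0,r_1,o_1,\dots,r_{i-1},o_{i-1})$, so this edge contributes a factor of $1$ to $P(\ell)$ for every $\overline{\vartheta}\in\rho$. At an action node the outgoing edge is an observation, whose probability $Pr(o_i\mid b_{i-1},a_i)$ is fixed by the gPOMDP transition and observation functions together with the belief $b_{i-1}$; crucially, both $b_{i-1}$ (computed via $bp^*$ from the rule--observation history) and the action $a_i$ (determined by $r_i$) are functions of the path only and carry no dependence on $\overline{\Theta}$. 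Hence $P(\ell)=\prod_{i=1}^{H} Pr(o_i\mid b_{i-1},a_i)$ is completely determined by $\ell$, independent of which $\overline{\vartheta}\in\rho$ was chosen.

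Combining the two steps, $E_\pi(\overline{\vartheta};H)=\sum_{\ell\in L} t(\ell)\,P(\ell)$ is a quantity that depends only on $L$, so it takes a single value across all of $\rho$. I expect the main obstacle to be bookkeeping in the first step rather than any deep argument: one must verify that leaves outside $L$ contribute zero (their required rule-selection factor vanishes for every $\overline{\vartheta}\in\rho$, which is precisely the content of $\ell\notin braid_{\pi,H}(\overline{\vartheta})$), and that the leaf-wise decomposition is consistent with the definition $E_\pi(\overline{\vartheta};H)=\sum_{t=1}^H t\times Pr_{\mathcal{G},t}[\pi(b,\overline{\Theta})]$ --- i.e., that a trajectory reaching the goal before $H$ is counted exactly once, at its first-passage time, which the sink-state property guarantees. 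Once those details are pinned down, constancy of the expected cost on each partition is immediate.
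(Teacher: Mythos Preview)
Your proposal is correct and follows essentially the same route as the paper: both arguments reduce the claim to ``same braid $\Rightarrow$ same expected cost,'' with the paper packaging that step as Lemma~\ref{lemma:expected_value} (same braid $\Rightarrow$ same policy tree $\Rightarrow$ same leaf distribution) and then invoking Theorem~\ref{theorem:disjoint_int}. Your version simply unpacks Lemma~\ref{lemma:expected_value} more explicitly by factoring $P(\ell)$ into rule-selection factors (which are $1$ on the braid) and observation factors (which depend only on the path), but the underlying idea is identical.
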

 In some situations,  the braids
that partition the parameter space can be calculated in closed
form (e.g., see the Appendix for partitions for the Spaceship Repair problem). The next section develops a general approach for computing the braids and intervals corresponding to a parameterized BSQ policy, for evaluating the expected cost for each such partition, and for optimizing over these partitions.

\section{Partition Refinement Search }
\label{sec:algorithms}

\begin{wrapfigure}{r}{0.48\textwidth}
\vspace{-25pt}
    \begin{minipage}{0.48\textwidth}
\centering
\begin{algorithm}[H]
\caption{Partition Refinement Search  (PRS)}
\label{alg:ips}
\begin{algorithmic}[1]
\STATE{Inputs: gPOMDP $\mathcal{P}$, parameterized BSQ policy $\pi(b,\overline{\Theta})$, horizon $H$}

\STATE{Output: Minimum cost partition and its expected cost $\langle\rho_{opt},\hat{E}[\rho_{opt}]\rangle$}

\STATE{$\rho_{init} \leftarrow \bigtimes_{\Theta\in\overline{\Theta}}\mathcal{D}_\Theta$}
\STATE{$X=\{\langle \rho_{init},\infty\rangle\},X_{opt}=\langle \rho_{init},\infty\rangle$}
\WHILE{$!TimeOut()$} 
\STATE{$\langle\rho,\hat{E}[\rho]\rangle \leftarrow SelectPartition(X)$}
\STATE{$\overline{\vartheta}_s \sim  UniformSample(\rho)$}
\STATE{$\ell, E_\ell \leftarrow Rollout(\mathcal{P},\pi(b,\overline{\vartheta}_s),H)$}
\STATE{$X \leftarrow (X\setminus \langle \rho,\hat{E}[\rho]\rangle) \cup \langle \rho\setminus I(\ell),\hat{E}[\rho]\rangle$}
\STATE{$X \leftarrow X \cup \langle \rho\cap I(\ell),\hat{E}[\rho]\cup E_\ell\rangle$}
\STATE{$X_{opt} \leftarrow \mathop{\arg \min}\limits_{\langle \rho,\hat{E}[\rho]\rangle\in X} \hat{E}[\rho]$}
\ENDWHILE
\RETURN{$X_{opt}$}
\end{algorithmic}
\end{algorithm}
    \end{minipage}
    \vspace{-5pt}
  \end{wrapfigure}

In this section, we present a novel algorithm for optimizing the parameters for a parameterized BSQ policy using the theory of braids developed above. The Partition Refinement Search (PRS) algorithm (Algo.\,\ref{alg:ips}) constructs the set of partitions using hierarchical partition selection and refinement, where a partition is selected to be refined, a leaf that can occur in that partition is sampled and evaluated, and the partitions are refined   to isolate the interval of the braid corresponding to the sample. The hypothesized optimal partition is tracked and returned as the final result after timeout.

PRS constructs the first parameter space interval as the domain of all possible parameter values (line 3). This is 
set as the initial hypothesized optimal partition (line 4).
In each iteration, a partition $\rho$ is selected using exploration-exploitation approaches discussed in Sec.\,\ref{sec:ips_variants} (lines 6). A leaf $\ell$ is sampled from $\rho$ by uniformly sampling parameter value $\overline{\vartheta}$ from $\rho$'s parameter intervals and performing rollouts from the initial belief state to a reachable leaf using the BSQ policy $\pi(b,\overline{\vartheta})$ (lines 7 and 8). 
The sampled leaf $\ell$ is used to refine partition $\rho$ using the insight braids cannot overlap (Sec.\,\ref{sec:piecewise_const}). If there exists a subinterval of $\rho$ where $\ell$ does not occur, a new partition for this subinterval is constructed containing $\rho$'s previous leaves and expected cost (line 9). The remaining portion of $\rho$, where $\ell$ can occur, is used to construct a partition with an updated expected cost representing $\rho$'s previous leaves and $\ell$ (line 10).  The hypothesized optimal partition is then updated (line 11).

  PRS converges to the true optimal BSQ policies in the limit:
  

\begin{restatable}{thm}{prcomplete}
Let $\pi(b,\overline{\Theta})$ be a parameterized BSQ policy, $\mathcal{P}$ be a gPOMDP, $b_0$ the initial belief state, and $H$ be the horizon. The likelihood of the Partition Refinement Search algorithm returning the optimal parameter interval converges to one in the limit of infinite samples.
\label{theorem:Pr_complete}
\end{restatable}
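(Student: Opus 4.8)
The plan is to establish probabilistic completeness by showing that the finite partition structure guaranteed by \Cref{theorem:disjoint_int} and \Cref{theorem:piecewise_constant} is fully recovered by PRS with probability approaching one. First I would invoke the key structural facts: by \Cref{theorem:disjoint_int}, the parameter space $\mathbb{R}^n$ (restricted to the bounded domain $\rho_{init}$) decomposes into finitely many disjoint intervals, one per braid, and by \Cref{theorem:piecewise_constant} each such partition has a single constant expected cost. Let $\{\rho_1^*,\ldots,\rho_k^*\}$ denote this true finite partition and let $\rho_{opt}^*$ be the one of minimum expected cost. The goal reduces to showing that, in the limit of infinitely many iterations, PRS isolates each true partition exactly and assigns it its correct expected cost, so that $X_{opt}$ converges to $\rho_{opt}^*$.

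The central argument is that each refinement step is \emph{sound} and that, over infinitely many samples, it is \emph{exhaustive}. For soundness I would argue that when a leaf $\ell$ is sampled from a policy $\pi(b,\overline{\vartheta}_s)$ with $\overline{\vartheta}_s\in\rho$, the split on line 9--10 into $\rho\setminus I(\ell)$ and $\rho\cap I(\ell)$ never separates two parameter assignments that share a braid: because braids correspond exactly to fixed leaf sets and their intervals are disjoint (\Cref{theorem:disjoint_int}), the interval $I(\ell)$ computed via \Cref{lemma:belief_param_eval} and \Cref{def:leaf_interval} respects the true partition boundaries, so every maintained partition $\rho\in X$ is always a union of whole true partitions, never a proper cut across one. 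Thus the estimate $\hat{E}[\rho]$ accumulated from sampled leaves is a lower-complexity approximation that becomes exact once all leaves of the braid have been observed. For exhaustiveness I would show that any true partition $\rho_i^*$ has positive Lebesgue measure (it is an $n$-dimensional interval with nonempty interior, as it is a nonempty intersection of half-spaces from the finitely many BSQ thresholds), hence the uniform sampling on line 7 draws a point from $\rho_i^*$ with probability bounded below by $\mathrm{vol}(\rho_i^*)/\mathrm{vol}(\rho_{init})>0$ each time the enclosing partition is selected. Combining this with a selection strategy that revisits every surviving partition infinitely often (the exploration guarantee of \Cref{sec:ips_variants}), a Borel--Cantelli argument gives that every true partition is sampled infinitely often with probability one, and that within each, every reachable leaf --- each occurring with positive rollout probability under the stochastic dynamics and observations --- is eventually observed.

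Assembling these pieces, I would conclude that with probability one every true partition $\rho_i^*$ is eventually isolated as an element of $X$ with $\hat{E}[\rho_i^*]$ equal to its exact expected cost, since by \Cref{theorem:piecewise_constant} observing the full (finite) leaf set of a braid determines its cost exactly. Because line 11 always tracks the minimum-cost element of $X$, once $\rho_{opt}^*$ is isolated with its true cost, $X_{opt}$ equals $\rho_{opt}^*$ and remains so; hence the probability that PRS returns the optimal partition converges to one.

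The hard part will be the exhaustiveness and exactness coupling: I must ensure not only that each true partition is \emph{selected} and \emph{sampled} infinitely often, but that \emph{all} of its constituent leaves are eventually realized, since a braid's expected cost is only exact once its complete leaf set is observed. This requires two positivity guarantees working together --- positive-measure sampling of each partition and positive rollout probability of each reachable leaf within it --- together with a selection policy that does not permanently starve any surviving partition. I would isolate the precise exploration condition on \textsc{SelectPartition} needed to guarantee infinitely-often selection of every partition, and verify that a partially-observed braid can never spuriously appear cheaper than $\rho_{opt}^*$ in a way that causes premature, non-recoverable convergence; handling this monotonicity of the cost estimate carefully is where the argument is most delicate.
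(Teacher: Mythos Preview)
Your overall architecture---finiteness of the true partition (via \Cref{theorem:disjoint_int}), soundness of each split (every maintained $\rho$ is a union of whole braid intervals because $I(\ell)=\{\overline{\vartheta}:\ell\in braid_{\pi,H}(\overline{\vartheta})\}$ is itself such a union), and exhaustiveness via positive-measure sampling plus positive rollout probability---matches the paper's route, which packages the soundness part into three short lemmas and then argues the same positivity chain $Pr(\rho)\cdot Pr(I(\ell)\mid\rho)\cdot Pr(\ell)>0$.

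There is, however, a genuine gap in how you handle the cost estimate. You claim that $\hat{E}[\rho]$ ``becomes exact once all leaves of the braid have been observed'' and that ``by \Cref{theorem:piecewise_constant} observing the full (finite) leaf set of a braid determines its cost exactly.'' Neither is what the algorithm does nor what \Cref{theorem:piecewise_constant} gives you. The rollout on line~8 returns a single leaf with its realized cost $E_\ell$, and $\hat{E}[\rho]$ is an empirical average of such realized costs; having seen each leaf at least once does not recover the expected cost, because you do not have the leaf probabilities. \Cref{theorem:piecewise_constant} only says the true cost is constant on each braid interval---it says nothing about PRS computing that constant from the leaf set. The paper closes this gap by a separate law-of-large-numbers step: once a partition has been isolated, continued sampling from it (guaranteed by the same infinitely-often selection you invoke) drives the empirical average $\hat{E}[\rho]$ to the true expected cost almost surely, and since there are only finitely many partitions, all estimates are eventually simultaneously accurate enough for the $\arg\min$ on line~11 to pick $\rho_{opt}^*$.

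This also undermines your final sentence: you cannot assert that once $\rho_{opt}^*$ is isolated ``$X_{opt}$ equals $\rho_{opt}^*$ and remains so,'' because other partitions' empirical estimates may temporarily undercut it. The correct statement is that the probability of $X_{opt}=\rho_{opt}^*$ tends to one, which follows only after the LLN argument is in place for every partition. Your ``hard part'' paragraph worries about partially-observed braids looking spuriously cheap, but the real issue is that even fully-isolated braids carry noisy estimates; replace the exactness claim with an LLN argument and the proof goes through.
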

\textbf{Complexity analysis }
While the theoretical space and time complexity are linear in the number of leaves, due to PRS grouping leaves from the strategy tree (Def.\,\ref{def:strat_tree}), there is good reason to expect better performance in practice. As discussed in Sec. \ref{sec:strate_tree_features},  strategy trees can get pruned with the removal of branches and leaves that do not align with the user's requirements. For example, in the Spaceship Repair problem using the Fig.\,\ref{fig:spaceship_repair_combined} parameterized BSQ policy, a third of the possible leaves are pruned at a horizon of two, and the pruning becomes even more significant for longer horizons.
Additionally, empirical results suggest that
rules earlier in rule-observation trajectories are more important in dictating the partitions. Furthermore, selecting and refining partitions can be performed in parallel, further improving performance.

\section{Partition Selection Approaches}
\label{sec:ips_variants}

We explored multiple partition selection approaches with a multiprocessing version of PRS. Each approach used the same dynamic exploration rate $e_r$ that diminished over time. Each thread managed a subset of partitions $ X'\subseteq X$ and updated a global hypothetical optimal partition. Additionally, we warm start PRS by randomly selecting 20 points in the parameter search space and evaluating them 40 times to build an initial set of partitions. Also, partitions that have a lower expected cost than the hypothesize optimal are sampled up to 40 before updating the hypothesize optimal. In this paper, we focus on three selection approaches and discuss two others in the Appendix. 

\textbf{Epsilon Greedy (PRS-Epsilon)}\quad
We explore $e_r$ percent of the time by uniformly sampling $s\sim U_0^1$ and checking if $s\leq e_r$. If we are exploring, we uniformly at random select a partition from $X'$. Otherwise, the partition with minimum expected cost, $\mathop{\arg \min}_{\langle \rho,\hat{E}[\rho]\rangle\in X'} \hat{E}[\rho]$, is selected.

\textbf{Boltzmann Exploration (PRS-Bolt)}\quad Partitions are selected in a weighted random fashion with the probability of selecting partition $\rho$ as $\alpha \times exp(\hat{E}[\rho]/e_r)$ with $\alpha$ being the normalization factor.

\textbf{Local Thompson Sampling (PRS-Local)}\quad
Each thread treats the problem as a multi-armed bandit problem where the expected cost for the next sample from each partition is simulated using $\mathcal{N}(\mu_c,\sigma_c)$ with $\mu_c$ and $\sigma_c$ being the partition's mean and standard deviation, respectively. The partition with the lowest estimated expected cost is selected. 

\section{Empirical Results}
\label{sec:empirical}

We created an implementation of PRS and evaluated it on four challenging risk-averse problems.
Complete source code is available in the supplementary material.
We describe the problems and user preferences here; further details, including parameterized BSQ policy, can be found in the Appendix.


\textbf{Lane merger (LM)}\quad In this problem, an autonomous vehicle driving on a two-lane road must switch lanes safely before reaching a lane merger. However, there is currently a car in the other lane that the agent does not know the location or speed of. Switching lanes too close to this car risks a severe accident. The autonomous vehicle has a noisy detection system that returns whether a vehicle is located in regions around the car. 
The user's preference is:
\emph{If there's a high likelihood of safely switching lanes, do so. If there is a high likelihood of the other car being in close proximity and it is possible to slow down, slow down. Otherwise, keep going.}

\textbf{Spaceship repair (SR)}\quad
This is a modified version of the running example with parameterized BSQ policy Fig.\,\ref{fig:spaceship_repair_combined}(b). The robots start 7 steps and 5 steps away from the robot and ship repair stations, respectively. Additionally, the robot's sensor is 75\% accurate at detecting errors with the robot and only 55\% for the ship. With the short horizon $H=12$, this results in the parameter space being not convex with multiple local minimums with differing expected costs.

\textbf{Graph rock sample (GRS)}\quad
We modified the classic RockSample($n,k$) problem \citep{10.5555/1036843.1036906} by replacing the grid with a graph with waypoints where some waypoints contain rocks. Additionally, we introduced risk by causing the robot to break beyond repair if it samples a rock not worth sampling. We also categorized the rocks into types, and the rover's goal is to bring a sample of each type to the desired location if a safe rock for that type exists. This goal requires a longer horizon to reach compared to the other problems. The user's preference is:
\emph{Evaluating rocks of types not sampled in order $r_1,...,r_n$, if the rock has a high likelihood of being safe to sample, go and take a sample of it. Else, if the rock has a high likelihood of being safe to sample, get close enough and scan it. Otherwise, move towards the exit if no rocks are worth sampling or scanning.
}

\textbf{Store visit (SV)}\quad
This problem is based on the partially observable OpenStreetMap problem in \citet{liu2021leveraging}. A robot is located in a city where some locations are unsafe (e.g., construction, traffic), which can terminally damage the robot. The robot is initially uncertain of its location but it can scan its surroundings to determine its general location. The agent traverses the city and can visit the closest building. The goal is to visit a bank and then a store. This problem features a nuanced parameterized BSQ policy:
\emph{If you are significantly unsure of your current location, scan the area. If you have visited a bank, do the following to visit a store; otherwise, do it to visit a bank. If you are sufficiently sure the current location has the building you are looking for, visit it. Otherwise, move towards where you think that building is while avoiding unsafe locations. If all else fails, scan the current area.}


\subsection{Baselines}

We evaluated PRS against three different types of baselines.

\textbf{RCompliant}\quad
Select random parameter values uniformly at random from the parameter space to produce user-aligned policies.

\textbf{Hyperparamter optimization algorithms}\quad
To measure the benefits of PRS against existing hyperparameter optimization algorithms, we implemented both \textit{Nelder-Mead} \citep{nelder1965simplex} and \textit{Particle Swarm} \citep{kennedy1995particle}. The expected cost of parameter space point $\overline{\vartheta}$, for parameterized BSQ policy $\pi$, was computed by averaging 1,000 parallel runs of the policy $\pi(b;\overline{\vartheta})$. For Nelder-Mead optimization, we used a simplex that had vertices numbering one more than the number of parameters in the parameterized BSQ policy being optimized. We warm start by initially evaluating a 100 random points to construct the initial simplex using the best-performing points. For Particle Swarm optimization, 10 particles were used with the location and momentum of each particle clipped to the search space. The coefficients changed based on steps since the last improvement.

\textbf{Unconstrained POMDP solvers}\quad
To measure the differences between BSQ policies and unconstrained POMDP solvers, we implemented variations of our problems into POMDPX and solved them with DESPOT \citep{somani2013despot} and SARSOP \citep{kurniawati2009sarsop} for 1,000 evaluation runs. To measure whether an action-observation trajectory produced with these solvers aligns with the user's requirements, we check if there exist parameter values $\overline{\vartheta}$ where policy $\pi(b;\overline{\vartheta})$ could produce that trajectory. We use this to evaluate the solutions produced.

\subsection{Analysis of Results}





\begin{figure*}[t!]
    \centering
    \includegraphics[width=1.0\textwidth]{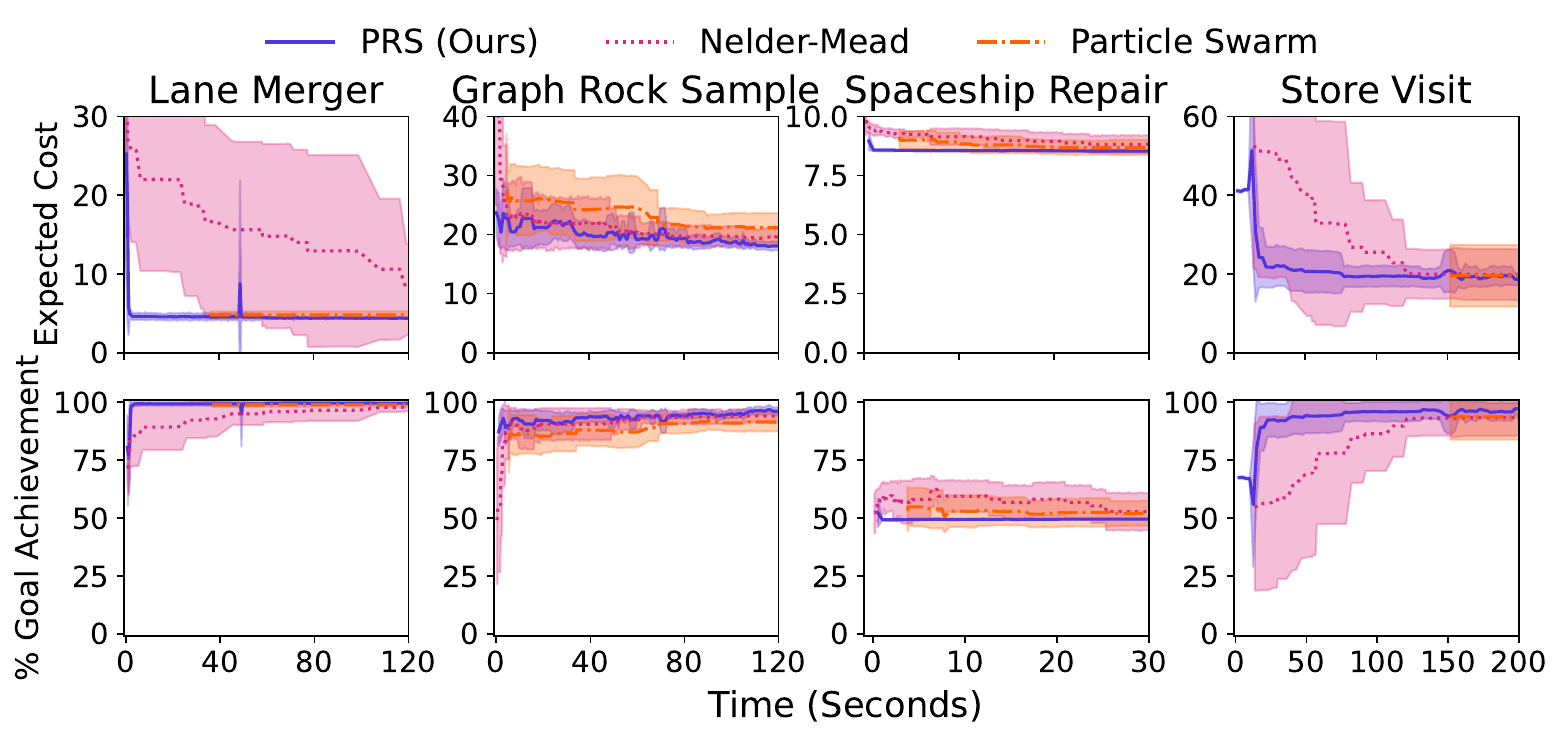}
    \caption{Empirical results evaluating the hypothesized optimal partition performance tracked. Equally spaced samples across PRS evaluation time are taken while a sample is taken each iteration of Nelder-Mead and Particle Swarm. The error displayed is the standard deviation error.}
    \label{fig:results}
\end{figure*}

For each problem, we evaluated each baseline and PRS variant ten times. The horizon was 12 for Spaceship Repair and 100 for the other problems. The timeout for PRS was set on a problem-by-problem basis. Timeout for Nelder-Mead and Particle Swarm was one hour. Note that the highest expected cost is equal to the horizon due to the default cost function.
The performance of each PRS partition selection approach can be found in Figure \ref{fig:prs_results} and the quality of solutions over time compared to the baselines are shown in Figure \ref{fig:results}.

\textbf{Partition selection approach evaluations}\quad
PRS partition selection approaches converged to a similar quality policy. The only difference was the time taken with approaches that did not rely on the standard deviation converging faster due to there being a lower standard deviation near the optimal solution, causing selection approaches that used the standard deviation to explore the wrong partitions. We use PRS-Bolt as a representative when comparing against the other baselines.

\begin{wrapfigure}{r}{7cm}
    \includegraphics[width=7cm]
    {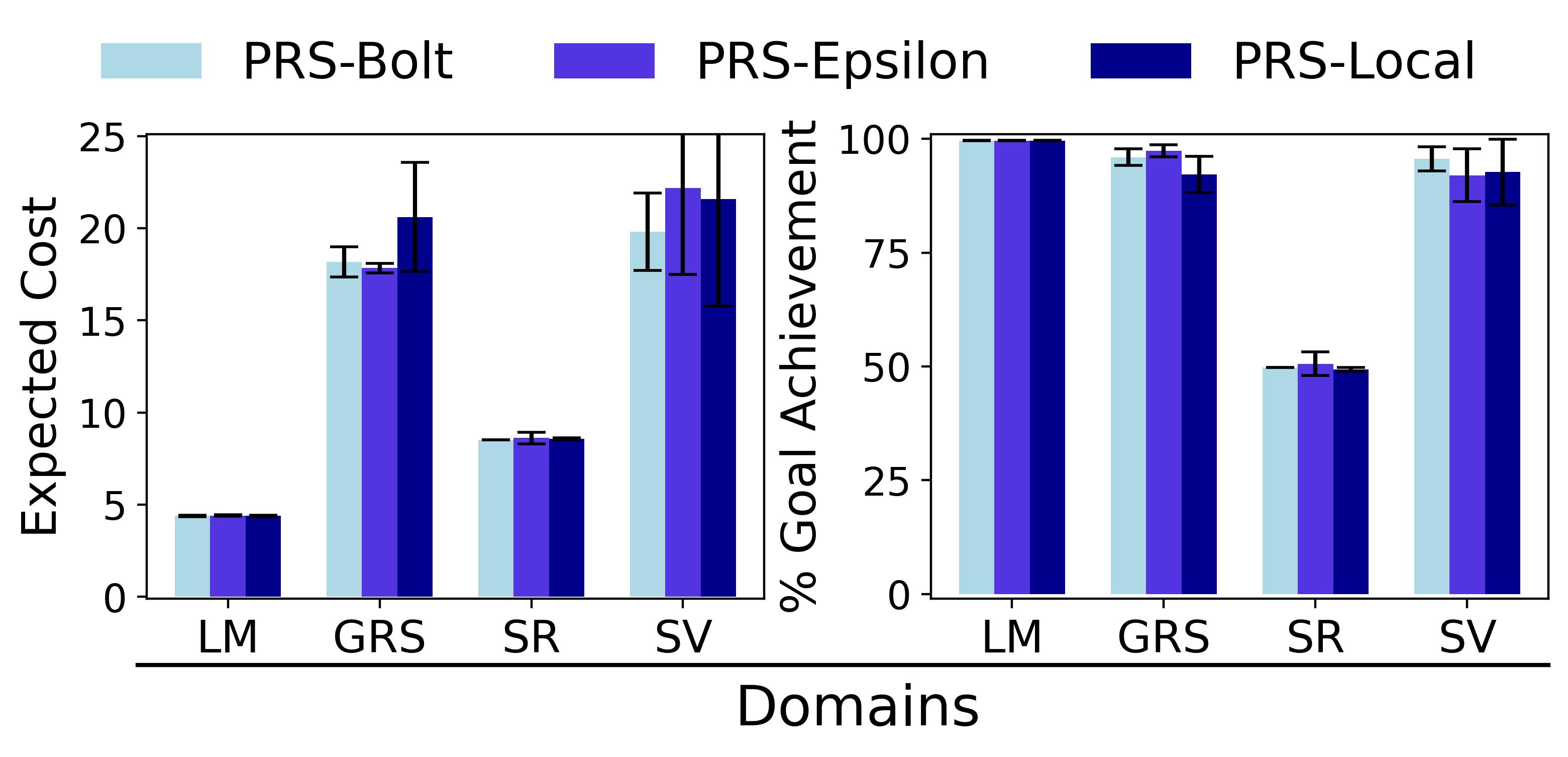}
    \caption{Results for PRS with different partition selection approaches from Section \ref{sec:ips_variants}.}
    \label{fig:prs_results}
\end{wrapfigure}

\textbf{PRS solution quality}\quad
PRS produced a higher-quality policy compared to the ones produced by RCompliant. For Spaceship Repair, the simplest problem solved on the shortest horizon, policies produced by PRS-Bolt had a 15.68\% lower expected cost and 3.47\% higher goal achievement rate. For the other problems, policies produced by RCompliant had more than triple the expected cost and achieved only half the success rate on both Graph Rock Sample and Store Visit. These results demonstrate that optimizing BSQ parameter values has a significant impact on the performance of user-aligned policies.


\textbf{Hyperparameter optimization evaluation}\quad 
Compared to traditional hyperparameter optimization algorithms, PRS always found the user-aligned policy with the lowest expected cost with little performance deviation. This is due to Nelder-Mead and Particle Swarm struggling to optimize a non-convex piecewise-constant function using noisy data, resulting in known problems with local-search algorithms: problems of getting stuck in sub-optimal local minima and exploring the incorrect space. Additionally, PRS converged first since it is more sample-efficient. It is computationally expensive to update the belief state, resulting in poor-quality solutions being more expensive to evaluate due to taking longer to reach the goal. PRS only requires a couple of evaluations before spending the computational resources on more promising areas. 

An interesting result is that, in Spaceship Repair, solutions found by Nelder-Mead and Particle Swarm both had a 7.73\% higher expected cost and 18.31\% higher goal achievement rate than the PRS-Epsilon solutions. There is likely a high negative correlation between the expected cost and goal achievement rate. PRS is better at optimizing the stated objective of minimizing the expected cost.

\textbf{Unconstrained solver evaluation}\quad
Without guiding from the parameterized BSQ policies, DESPOT and SARSOP struggled with this set of problems. SARSOP failed to converge to a policy due to the long problem horizon. DESPOT could not run on Lane Merger, which had the largest state space and branching factor. DESPOT also only achieved the goal 0.5\% of the time on Graph Rock Sample. DESPOT achieved a lower expected cost of 20.0\% and 13.3\% on variations of Spaceship Repair and Store Visit, respectively. However, DESPOT's policy never aligned with the user's requirements on Store Visit and only 7.3\%  of the time on Spaceship Repair. This indicates that the BSQ framework offers a new approach for expressing both domain knowledge and user requirements.

\section{Conclusion}
\label{sec:conclusion}

We presented the BSQ policy framework for expressing users' requirements over the belief state in partially observable settings for computing user-aligned agent behavior. We performed a formal analysis of these policies, proving that the parameter value space introduced in the parameterized BSQ policies can be partitioned, resulting in parameterized BSQ policies being optimizable through a hierarchical optimization paradigm. We introduced the probabilistically complete Partition Refinement Search algorithm to perform this optimization. Our empirical results show that it converges to the optimal user-aligned policy quicker and more consistently than existing approaches. Results indicate that parameterized BSQ policies provide a promising approach for solving diverse real-world problems requiring user alignment.

\textbf{Limitations and future work}\quad
There are many interesting directions for future work based on the current BSQ policy framework. BSQ representations can be made more expressive by allowing deterministic functions, which would not compromise the presented theoretical results. Furthermore, there exists a natural extension of this work into finite memory controllers that allows temporally extended requirements to be encoded with the same theoretical results. Relaxing the constraints on mapping each belief state to a single action would expand the usability. For more complex problems, a belief-state approximation approach would be required, but the underlying strategy tree discussed in this work would remain mostly unchanged. Another interesting research direction is to develop methods that help users express their requirements in the BSQ framework. 

\begin{ack}
This work was supported in part by ONR grant N000142312416 and NSF grant IIS 1942856.
\end{ack}

\bibliography{aaai24}

\appendix
\clearpage
\section{Appendix Organization}
The Appendix is organized as follows. Appendix \ref{appendix:pwc_lemmas_proofs} contains the proofs for showing the expected cost function is piecewise constant. Appendix \ref{appendix:prs_proof} contains the proof that PRS is probabilistically complete. Appendix \ref{appendix:problem_bsq_prefs} discusses the evaluation problems and provides the parameterized BSQ policies used. Appendix \ref{appendix:hop_implementation} discussed additional implementation details of both Nelder-Mead and Particle Swarm. Appendix \ref{appendix:additional_results} discusses two additional partition selection approaches we tested and provides additional analysis of our results. Appendix \ref{appendix:experimental_setup} discusses the experimental setup and computational cost of our experiments. Appendix \ref{appendix:sr_closed_form} contains the calculated closed-form solution of the partitions for the Spaceship Repair problem. Appendix \ref{append:broader_impacts} discusses the broader impacts of our work. Finally, Appendix \ref{appendix:limitations_future_work} discusses additional limitations not discussed in the main paper.

\section{Lemmas and Proofs From Formal Analysis [Section \ref{sec:formal}]}
\label{appendix:pwc_lemmas_proofs}

In this section, we provide the formal proofs for Lemma \ref{lemma:belief_param_eval}, Theorem \ref{theorem:equiv_partitions}, Theorem \ref{theorem:disjoint_int}, and Theorem \ref{theorem:piecewise_constant} from Section \ref{sec:formal}, where we proved that braids partition the parameter space resulting in the expected cost function of a parameterized BSQ policy w.r.t its parameter being piecewise constant. We define and prove Lemmas \ref{lemma_subsets}, \ref{lemma:no_overlap}, \ref{lemma:equiv_intervals}, and \ref{lemma:expected_value} in this section for building these proofs.

First, we prove that the similarity operator $\equiv_H$ for braids (Def.\,\ref{def:equiv}) has the properties of being reflexive, symmetric, and transitive. As such, $\equiv_H$ defines an equivalence relation over the n-dimensional parameter space $\mathbb{R}^n$, meaning it defines a partition over $\mathbb{R}^n$.

\equivpartitions*
\begin{proof}
Let $\overline{\vartheta} \in \mathbb{R}^n$ be $n$-parameter values and $H$ be the horizon.
By way of contradiction, let's assume that $\overline{\vartheta}$ is not similar to itself, $\overline{\vartheta}\not\equiv\overline{\vartheta}$. This would mean that $braid_{H,1}(\overline{\vartheta}) \neq braid_{H,2}(\overline{\vartheta})$. As such, there must exist a leaf $\ell$, which is in one but not the other braid. Note that $\ell$ represents a unique rule-observation trajectory $\{r_1,o_1,...,r_H,o_H\}$. Additionally, for $\ell$ to be in one of these braids it would need to be true that $\forall i, r_i.\Psi(b_i^*,\overline{\vartheta})$ must be satisfied, where $b_i^* = bp^*(b_0,r_1,o_1,...,r_i,o_i)$ (Def.\,\ref{def:braid}). However, note that this would hold true for the other braid as well, making it a contradiction for $\ell$ to be exclusive in either $braid_{H,1}(\overline{\vartheta})$ or $braid_{H,2}(\overline{\vartheta})$. As such, $\overline{\vartheta}$ must be similar to itself meaning the similarity property holds.

Let $\overline{\vartheta}_1,\overline{\vartheta}_2,\overline{\vartheta}_3 \in \mathbb{R}^n$ where  $\overline{\vartheta}_1 \equiv_H \overline{\vartheta}_2$ and $\overline{\vartheta}_2 \equiv_H \overline{\vartheta}_3$. Therefore, $braid_{\pi,H}(\overline{\vartheta}_1) = braid_{\pi,H}(\overline{\vartheta}_2)$ and $braid_{\pi,H}(\overline{\vartheta}_2) = braid_{\pi,H}(\overline{\vartheta}_3)$ (Def.\,\ref{def:braid}). Using substitution, $braid_{\pi,H}(\overline{\vartheta}_1) = braid_{\pi,H}(\overline{\vartheta}_3)$ meaning  $\overline{\vartheta}_1 \equiv_H \overline{\vartheta}_3$. As such, the transitive property holds.

Due to set equality being symmetric, the symmetric property holds. 
Thus, the operator $\equiv_H$ is an equivalence relation over $\mathbb{R}^n$ causing $\equiv_H$ to define a partition over $\mathbb{R}^n$.
\end{proof}

For compound BSQs $\Psi$, we now prove that there exist unique intervals of the parameter space where $\Psi$ is satisfied that we can calculate.

\beliefparameval*
\begin{proof}
    Let $\mathcal{P}$ be a gPOMDP, $b$ be a belief state, $\Theta\in\mathbb{R}$ be a parameter, $\circ$ be a comparison operator, and $\varphi$ be a first-order logic formula composed of functions from $\mathcal{P}$. There exist two possible forms for a BSQ (Def.\,\ref{def:bsq}). 
    Let  $\lambda_p(b;\varphi,\circ,\Theta)=Pr\llbracket\varphi\rrbracket_b \circ \Theta$. Note that $Pr\llbracket\varphi\rrbracket_b$ evaluates into the probability of $\varphi$ being satisfied in a belief state $b$. Therefore, we can simplify $\lambda_p(b;\varphi,\circ,\Theta)$ to $p\circ\Theta$ where $p\in[0,1]$, meaning this type of BSQ simplifies to an inequality.
    Now, let  $\lambda_p(b;\varphi,\circ,\Theta)=Pr\llbracket\varphi\rrbracket_b == 1$ where $\varphi$ is composed of $\Theta$ and fully observable functions in $\mathcal{P}$. We assume that $\Theta$ cannot be used as a function parameter, meaning that it must be an operand of a relational operator in $\varphi$. Since the functions are fully observable, they can be evaluated for $b$, leaving the inequalities involving $\Theta$ to dictate whether $\varphi$ is satisfied. Thereby, BSQs evaluate to inequalities involving $\Theta$. 
    
    A compound BSQ $\Psi$ comprises conjunctions/disjunctions of BSQs by Definition $\ref{def:comp_bsq}$. By substituting each BSQ with its inequalities, we can calculate the interval of $\Psi$, $I(\Psi)$.

Let us assume that $\Theta \in I(\Psi)$. By way of contradiction, let us assume that $\Theta$ does not satisfy $\Psi$. If $\Psi$ is a conjunction of BSQs, there exists at least one BSQ that is not satisfied by $\Theta$. If $\Psi$ is a disjunction, all the BSQs are unsatisfied by $\Theta$. However, this would mean that $\Theta$ cannot satisfy the inequalities from these BSQs, so $\Theta$ cannot be in $I(\Psi)$ since $I(\Psi)$ is constructed using the regions of the parameter space that satisfy the necessary BSQs, which is a contradiction.

Conversely, let us assume that $\Theta$ satisfies $\Psi$. This means one or all the BSQs are satisfied by $\Theta$ depending on if $\Psi$ is a conjunction or disjunction. If $\Theta$ was not in $I(\Psi)$, there could not exist a set of BSQs satisfied for $\Psi$ to be satisfied.

Thus, for a belief state $b$, a n-parameter compound BSQ $\Psi$ has an interval in the parameter space $I(\Psi)$ s.t. $\forall\Theta\in\mathbb{R}^n$, $\Theta\in I(\Psi)$ iff $\Theta(b;\Theta)$ evaluates true.
\end{proof}

As mentioned in Section \ref{sec:formal}, braids cannot be proper subsets of each other, which we will now prove in Lemma \ref{lemma_subsets}. As a high-level intuition, removing a leaf can only occur if a rule along that leaf's rule-observation trajectory is not satisfied, which would mean another rule must be satisfied since Def.\,\ref{def:bsq} guarantees coverage of the belief state and parameter space. This results in at least one leaf being added to a braid that removes this first leaf, making this new braid not a subset of the other one.

\begin{lemma}
Let $\pi(b,\overline{\Theta})$ be a parameterized BSQ policy, $\mathcal{P}$ be a gPOMDP, $b_0$ be the initial belief, and $H$ be the horizon. $\forall \overline{\vartheta}_1, \overline{\vartheta}_2 \in \mathbb{R}^n$, if $braid_{\pi,H}(\overline{\vartheta}_1) \subseteq braid_{\pi,H}(\overline{\vartheta}_2)$ then $braid_{\pi,H}(\overline{\vartheta}_1) = braid_{\pi,H}(\overline{\vartheta}_2)$.
\label{lemma_subsets}
\end{lemma}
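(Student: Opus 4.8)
The plan is to argue by contradiction, assuming $braid_{\pi,H}(\overline{\vartheta}_1) \subseteq braid_{\pi,H}(\overline{\vartheta}_2)$ together with a \emph{proper} inclusion, and deriving a contradiction. The structural fact I would establish first is that a braid is precisely the leaf set of a subtree of $\mathcal{T}_\pi(b_0)$ that, at every belief node it reaches, follows a \emph{single} outgoing rule edge while retaining \emph{all} observation edges at each action node. This follows directly from Def.\,\ref{def:bsq_pref}: the rule conditions $\{\Psi_1,\dots,\Psi_m\}$ are mutually exclusive and cover $\mathbb{R}^n$, so for a fixed assignment $\overline{\vartheta}$ and a fixed belief node, exactly one rule's compound BSQ is satisfied; the stochasticity of observations then keeps every observation edge. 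Crucially, each belief node is a single structural object of $\mathcal{T}_\pi(b_0)$ determined by its rule-observation history, so it is shared across the subtrees induced by $\overline{\vartheta}_1$ and $\overline{\vartheta}_2$, and the statement ``$\overline{\vartheta}_1$ selects a different rule there'' is well defined.

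Given the proper inclusion, I would fix a leaf $\ell^\ast \in braid_{\pi,H}(\overline{\vartheta}_2)\setminus braid_{\pi,H}(\overline{\vartheta}_1)$ with path $(r_1^\ast,o_1^\ast,\dots,r_H^\ast,o_H^\ast)$ and associated belief nodes $b_i^\ast$. By Def.\,\ref{def:braid}, membership in $braid_{\pi,H}(\overline{\vartheta}_2)$ means $\overline{\vartheta}_2$ satisfies every $\Psi_i^\ast$ at $b_i^\ast$, while non-membership in $braid_{\pi,H}(\overline{\vartheta}_1)$ means $\overline{\vartheta}_1$ fails to satisfy some $\Psi_i^\ast$. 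Let $k$ be the least such index. Then the prefix $(r_1^\ast,o_1^\ast,\dots,r_{k-1}^\ast,o_{k-1}^\ast)$ is consistent with $\overline{\vartheta}_1$, so the node $b_k^\ast$ is reachable under $\overline{\vartheta}_1$ as well.

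Next I would construct a witness leaf $\ell'$. By coverage, $\overline{\vartheta}_1$ satisfies exactly one rule $r_k'$ at $b_k^\ast$, and since it does not satisfy $\Psi_k^\ast$, we have $r_k' \neq r_k^\ast$. Extending the common prefix by $r_k'$ and then continuing to follow $\overline{\vartheta}_1$'s unique rule at each subsequent belief node, together with any positive-probability observation, yields a full depth-$H$ path, hence a leaf $\ell'\in braid_{\pi,H}(\overline{\vartheta}_1)$; the continuation to horizon $H$ exists because every action node has at least one outgoing observation edge (the observation function is a distribution). But at $b_k^\ast$ the path of $\ell'$ uses $r_k'$, whereas $\overline{\vartheta}_2$ already satisfies the distinct rule $r_k^\ast$ there, so mutual exclusivity forbids $\overline{\vartheta}_2$ from satisfying $\Psi_k'$, giving $\ell'\notin braid_{\pi,H}(\overline{\vartheta}_2)$. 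This contradicts $\ell'\in braid_{\pi,H}(\overline{\vartheta}_1)\subseteq braid_{\pi,H}(\overline{\vartheta}_2)$, so the inclusion cannot be proper and the two braids coincide.

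I expect the main difficulty to be bookkeeping rather than conceptual: making rigorous that $b_k^\ast$ is one and the same structural node under both assignments (so that comparing the rules they select is meaningful), and checking that the witness path can genuinely be extended all the way to depth $H$ under $\overline{\vartheta}_1$. Both points reduce to the mutual-exclusivity-and-coverage clause of Def.\,\ref{def:bsq_pref} (via the interval characterization of Lemma\,\ref{lemma:belief_param_eval}) together with the fact that observation distributions are never empty, so I anticipate no genuine obstruction.
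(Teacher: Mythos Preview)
Your proposal is correct and follows essentially the same approach as the paper: both argue by contradiction, locate the first point along the extra leaf's path where $\overline{\vartheta}_1$ and $\overline{\vartheta}_2$ diverge, and then use mutual exclusivity of the rule conditions at that shared belief node to exhibit a leaf in $braid_{\pi,H}(\overline{\vartheta}_1)$ that cannot lie in $braid_{\pi,H}(\overline{\vartheta}_2)$. The only presentational difference is that the paper selects its witness $\ell_1$ as the leaf of $braid_{\pi,H}(\overline{\vartheta}_1)$ with maximal common prefix with $\ell^\ast$ and then does a short case split (rule-divergence vs.\ observation-divergence, ruling out the latter by maximality), whereas you go straight to the least index $k$ where $\overline{\vartheta}_1$ fails $\Psi_k^\ast$ and construct $\ell'$ explicitly---this sidesteps the case analysis but is the same argument at heart.
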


\begin{proof}
Assume there exists $\overline{\vartheta}_1, \overline{\vartheta}_2 \in \mathbb{R}^n$ s.t. $braid_{\pi,H}(\overline{\vartheta}_1) \subset braid_{\pi,H}(\overline{\vartheta}_2)$ implying there exists leaf $\ell_{2}$ where $\ell_{2}\in braid_{\pi,H}(\overline{\vartheta}_2) \setminus braid_{\pi,H}(\overline{\vartheta}_1)$.

Let $\ell_{1} \in braid_{\pi,H}(\overline{\vartheta}_1)$ be the leaf with the largest rule-observation trajectory $\tau_0$ prefix shared with $\ell_{2}$ before differing. The trajectory for $\ell_{1}$ can be expressed as $\tau_0\tau_1$ where $\tau_1$ is the remaining trajectory for reaching $\ell_{1}$. Similarly, the trajectory for $\ell_{2}$ can be expressed as $\tau_0\tau_2$. 
Note $\tau_0$ represents the actions executed and observations observed from the initial belief state till right before the diversion resulting in the the belief state $b$ being the same for both leaves up to this point.

If the first element in $\tau_{1}$ and $\tau_{2}$ is a rule, note that $braid_{\pi,H}(\overline{\vartheta}_2)$ must also contain $\ell_{1}$. This would imply that $\pi(b;\overline{\vartheta}_2)$ is not mutually exclusive since two rules can occur in one element of the strategy tree. This is a contradiction by Def. \ref{def:bsq_pref}.
If the first element in $\tau_{1}$ and $\tau_{2}$ is an observation, different observations occurred after executing the last shared action in $\tau_{0}$. Due to the observation model and sharing the belief state $b$ at this point, both observations must be possible. This means a leaf in $braid_{\pi,H}(\overline{\vartheta}_1)$ must have a larger shared trajectory prefix than $\ell_{1}$, which is a contradiction. Thus, braids cannot be strict subsets of each other.
\end{proof}

Since braids cannot be proper subsets of each other, we can now prove that both braids must contain leaves the other does not have. In turn, this prevents the interval of braids from overlapping. Note that the interval of a braid can be calculated by taking the intersections of the intervals of each leaf contained in that braid (Def.\,\ref{def:leaf_interval}): $I(braid_{\pi,H}(\overline{\vartheta}))=\bigcap_{\ell \in braid_{\pi,H}(\overline{\vartheta})} I(\ell)$.

\begin{lemma}
Let $\pi(b,\overline{\Theta})$ be a parameterized BSQ policy,  $\mathcal{P}$ be gPOMDP, $b_0$ be the initial belief, and $H$ be the horizon. $\forall \overline{\vartheta}_1, \overline{\vartheta}_2 \in\mathbb{R}^n$, if $braid_{\pi,H}(\overline{\vartheta}_1) \cap braid_{\pi,H}(\overline{\vartheta}_2) \neq \varnothing$ and $braid_{\pi,H}(\overline{\vartheta}_1) \neq braid_{\pi,H}(\overline{\vartheta}_2)$ then $I(braid_{\pi,H}(\overline{\vartheta}_1)) \cap I(braid_{\pi,H}(\overline{\vartheta}_2)) = \varnothing$.
\label{lemma:no_overlap}
\end{lemma}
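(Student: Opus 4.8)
The plan is to reduce the claim about braid intervals to a statement about a single pair of leaves and then exhibit such a pair whose parameter intervals are provably disjoint. Using the identity $I(braid_{\pi,H}(\overline{\vartheta}))=\bigcap_{\ell}I(\ell)$, it suffices to find one leaf $\ell_1\in braid_{\pi,H}(\overline{\vartheta}_1)$ and one leaf $\ell_2\in braid_{\pi,H}(\overline{\vartheta}_2)$ with $I(\ell_1)\cap I(\ell_2)=\varnothing$, since then $I(braid_{\pi,H}(\overline{\vartheta}_1))\cap I(braid_{\pi,H}(\overline{\vartheta}_2)) \subseteq I(\ell_1)\cap I(\ell_2)=\varnothing$.

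First I would invoke Lemma~\ref{lemma_subsets} together with the hypothesis $braid_{\pi,H}(\overline{\vartheta}_1)\neq braid_{\pi,H}(\overline{\vartheta}_2)$ to conclude that neither braid is contained in the other, so there exists a distinguishing leaf $\ell_1\in braid_{\pi,H}(\overline{\vartheta}_1)\setminus braid_{\pi,H}(\overline{\vartheta}_2)$. Writing the trajectory of $\ell_1$ as $(r_1,o_1,\dots,r_H,o_H)$, Def.~\ref{def:braid} and the fact that $\ell_1\notin braid_{\pi,H}(\overline{\vartheta}_2)$ imply that $\overline{\vartheta}_2$ fails at least one rule condition along this path; I let $r_k=\Psi_k\rightarrow a_k$ be the \emph{first} such rule and $b_k=bp^*(b_0,r_1,o_1,\dots,r_{k-1},o_{k-1})$ the belief at that node.

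The core construction is to produce the companion leaf $\ell_2$. Because $r_k$ is the first rule not satisfied by $\overline{\vartheta}_2$, the shared prefix $r_1,o_1,\dots,r_{k-1},o_{k-1}$ is satisfied by $\overline{\vartheta}_2$, and its observations are environment-determined and hence possible; thus $b_k$ is genuinely reachable under $\pi(b,\overline{\vartheta}_2)$. By the coverage property of Def.~\ref{def:bsq_pref}, some rule $r_k'=\Psi_k'\rightarrow a_k'$ is satisfied at $b_k$ under $\overline{\vartheta}_2$, and mutual exclusivity forces $r_k'\neq r_k$. Extending this choice through the strategy tree to the horizon yields a leaf $\ell_2\in braid_{\pi,H}(\overline{\vartheta}_2)$ whose trajectory shares the prefix $r_1,o_1,\dots,r_{k-1},o_{k-1}$ with $\ell_1$ but diverges at $b_k$ by selecting the distinct rule $r_k'$.

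Finally I would convert this rule divergence into interval disjointness. At $b_k$ the conditions $\Psi_k$ and $\Psi_k'$ are mutually exclusive (Def.~\ref{def:bsq_pref}), and by Lemma~\ref{lemma:belief_param_eval} each is satisfied exactly on its interval, so these two intervals are disjoint. Since $I(\ell_1)$ and $I(\ell_2)$ are intersections over the rules on their respective paths (Def.~\ref{def:leaf_interval}), $I(\ell_1)$ lies in the interval of $\Psi_k$ at $b_k$ and $I(\ell_2)$ in that of $\Psi_k'$, whence $I(\ell_1)\cap I(\ell_2)=\varnothing$, which closes the reduction. The step I expect to be the main obstacle is the companion-leaf construction: I must argue carefully that $b_k$ is reachable under $\overline{\vartheta}_2$ so that the alternative rule is truly selected there and yields a bona fide leaf of $braid_{\pi,H}(\overline{\vartheta}_2)$, and that the belief at the divergence node is identical along both trajectories, since it is precisely this shared belief—evaluated against two mutually exclusive conditions—that lets Lemma~\ref{lemma:belief_param_eval} deliver disjoint parameter intervals.
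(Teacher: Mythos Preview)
Your argument is correct and takes a genuinely different route from the paper. The paper works at the level of sets of leaves: it decomposes each $I(braid_{\pi,H}(\overline{\vartheta}_i))$ as the intersection of the interval of the shared leaves with the interval of the leaves exclusive to $\overline{\vartheta}_i$, and then asserts that $I(braid_{\pi,H}(\overline{\vartheta}_2))$ cannot meet $I(braid_{\pi,H}(\overline{\vartheta}_1)\setminus braid_{\pi,H}(\overline{\vartheta}_2))$ because ``a braid is all reachable leaves''---a step that, once unpacked, is really a second appeal to Lemma~\ref{lemma_subsets} (any $\overline{\vartheta}_3$ in both sets would have $braid_{\pi,H}(\overline{\vartheta}_2)\subsetneq braid_{\pi,H}(\overline{\vartheta}_3)$). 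Your proof is instead constructive: you locate the \emph{first} belief node $b_k$ at which the two parameter assignments disagree on which rule fires, and invoke the mutual-exclusivity clause of Def.~\ref{def:bsq_pref} directly at that single node to obtain disjoint parameter intervals for the two competing conditions $\Psi_k$ and $\Psi_k'$. This buys you an explicit witness for the disjointness (the pair $\ell_1,\ell_2$ and the divergence node $b_k$) and makes transparent that the non-empty-intersection hypothesis is not actually used. Your worry about the companion-leaf construction is well placed but resolves cleanly: the beliefs along the shared prefix depend only on the identical actions and observations, so $b_k$ is literally the same node for both trajectories; and since membership in a braid (Def.~\ref{def:braid}) is defined purely by rule satisfaction along the path, extending past $b_k$ under $\overline{\vartheta}_2$ through any observation branches always yields a bona fide leaf of $braid_{\pi,H}(\overline{\vartheta}_2)$.
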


\begin{proof}
Let $\overline{\vartheta}_1, \overline{\vartheta}_2 \in\mathbb{R}^n$ where $braid_{\pi,H}(\overline{\vartheta}_1) \cap braid_{\pi,H}(\overline{\vartheta}_2) \neq \varnothing$ and $braid_{\pi,H}(\overline{\vartheta}_1) \neq braid_{\pi,H}(\overline{\vartheta}_2)$.
Both braids cannot be proper subsets (Lemma \ref{lemma_subsets}) meaning both braids must contain leaves that are not in the other braid: $braid_{\pi,H}(\overline{\vartheta}_2) \setminus braid_{\pi,H}(\overline{\vartheta}_1) \neq \varnothing$ and $braid_{\pi,H}(\overline{\vartheta}_1) \setminus braid_{\pi,H}(\overline{\vartheta}_2) \neq \varnothing$.

By Definition \ref{def:leaf_interval}, the interval of a braid is the conjunction of the intervals of each leaf it contains. Using the associative and commutative properties,
this can be rewritten as the conjunction of two sets: the interval of leaves shared and the interval of leaves not. 

$I(braid_{\pi,H}(\overline{\vartheta}_1)) = I(braid_{\pi,H}(\overline{\vartheta}_1) \cap braid_{\pi,H}(\overline{\vartheta}_2)) \cap I(braid_{\pi,H}(\overline{\vartheta}_1) \setminus braid_{\pi,H}(\overline{\vartheta}_2)$

$I(braid_{\pi,H}(\overline{\vartheta}_2)) = I(braid_{\pi,H}(\overline{\vartheta}_1) \cap braid_{\pi,H}(\overline{\vartheta}_2)) \cap I(braid_{\pi,H}(\overline{\vartheta}_2) \setminus braid_{\pi,H}(\overline{\vartheta}_1)$

 A braid's interval must exclude these unreachable leaves since a braid is all reachable leaves (Def.\,\ref{def:braid}). As such, $I(braid_{\pi,H}(\overline{\vartheta}_2))$ must not overlap with $I(braid_{\pi,H}(\overline{\vartheta}_1) \setminus braid_{\pi,H}(\overline{\vartheta}_2)$ and $I(braid_{\pi,H}(\overline{\vartheta}_1))$ must not overlap with $I(braid_{\pi,H}(\overline{\vartheta}_2) \setminus braid_{\pi,H}(\overline{\vartheta}_1)$. However, due to the conjunctions of intervals, $I(braid_{\pi,H}(\overline{\vartheta}_1) \subseteq I(braid_{\pi,H}(\overline{\vartheta}_1) \setminus braid_{\pi,H}(\overline{\vartheta}_2)$ and $I(braid_{\pi,H}(\overline{\vartheta}_2) \subseteq I(braid_{\pi,H}(\overline{\vartheta}_2) \setminus braid_{\pi,H}(\overline{\vartheta}_1)$. Thus, the intervals of $I(braid_{\pi,H}(\overline{\vartheta}_1)$ and $I(braid_{\pi,H}(\overline{\vartheta}_2)$ cannot overlap.
\end{proof}

The fact that two braids cannot have overlapping intervals allows us to prove that the sets of parameter values are similar iff they share the same braid interval.

\begin{lemma}
$\forall \overline{\vartheta}_1, \overline{\vartheta}_2 \in \mathbb{R}^n$, $\overline{\vartheta}_1 \equiv_H \overline{\vartheta}_2$ iff  $I(braid_{\pi,H}(\overline{\vartheta}_1)) = I(braid_{\pi,H}(\overline{\vartheta}_2))$.
\label{lemma:equiv_intervals}
\end{lemma}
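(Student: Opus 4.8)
The plan is to prove the biconditional by handling each direction separately, with essentially all of the content residing in the converse. For the forward direction, if $\overline{\vartheta}_1 \equiv_H \overline{\vartheta}_2$ then by Definition~\ref{def:equiv} we have $braid_{\pi,H}(\overline{\vartheta}_1) = braid_{\pi,H}(\overline{\vartheta}_2)$ as sets of leaves; applying the interval operator $I(\cdot)$ to two equal sets yields equal intervals, so $I(braid_{\pi,H}(\overline{\vartheta}_1)) = I(braid_{\pi,H}(\overline{\vartheta}_2))$ immediately, and no further work is needed here.

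The crux is a \emph{membership bridge} linking a parameter assignment to the interval of a single leaf: for any leaf $\ell$ and any $\overline{\vartheta}$, $\ell \in braid_{\pi,H}(\overline{\vartheta})$ iff $\overline{\vartheta} \in I(\ell)$. This follows by chaining the relevant definitions: by Definition~\ref{def:braid}, $\ell$ lies in the braid of $\overline{\vartheta}$ exactly when $\overline{\vartheta}$ satisfies every rule condition $\Psi_i$ along the (fixed) rule–observation path to $\ell$; by Lemma~\ref{lemma:belief_param_eval} each such $\Psi_i$ is satisfied iff $\overline{\vartheta} \in I(\Psi_i)$; and by Definition~\ref{def:leaf_interval}, $I(\ell) = \bigcap_i I(\Psi_i)$. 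An immediate consequence, obtained by intersecting the bridge over all leaves of a braid, is the self-membership fact $\overline{\vartheta} \in I(braid_{\pi,H}(\overline{\vartheta}))$, since $\overline{\vartheta} \in I(\ell)$ for every $\ell$ in its own braid and $I(braid_{\pi,H}(\overline{\vartheta})) = \bigcap_{\ell} I(\ell)$.

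For the converse, assume $I(braid_{\pi,H}(\overline{\vartheta}_1)) = I(braid_{\pi,H}(\overline{\vartheta}_2))$ and write $L_i = braid_{\pi,H}(\overline{\vartheta}_i)$. By self-membership, $\overline{\vartheta}_1 \in I(L_1) = I(L_2) = \bigcap_{\ell \in L_2} I(\ell)$, so $\overline{\vartheta}_1 \in I(\ell)$ for every $\ell \in L_2$; the membership bridge then gives $\ell \in L_1$ for every such $\ell$, i.e.\ $L_2 \subseteq L_1$. Since $L_2$ is non-empty (mutual exclusivity and coverage in Definition~\ref{def:bsq_pref} guarantee exactly one rule fires at each belief node, so at least one full trajectory is reachable), we have $L_1 \cap L_2 \neq \varnothing$, and Lemma~\ref{lemma_subsets} upgrades the containment $L_2 \subseteq L_1$ to equality $L_1 = L_2$, which is precisely $\overline{\vartheta}_1 \equiv_H \overline{\vartheta}_2$. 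Equivalently, one may argue by contradiction: if $L_1 \neq L_2$ while their intervals coincide and intersect in $\overline{\vartheta}_1$, then since $L_1 \cap L_2 \neq \varnothing$, Lemma~\ref{lemma:no_overlap} forces $I(L_1) \cap I(L_2) = \varnothing$, contradicting their equality. I expect the main obstacle to be justifying the backward half of the membership bridge rigorously — namely that $\overline{\vartheta} \in I(\ell)$ actually renders $\ell$ \emph{reachable} — which requires observing that the feasibility of the observation edges along the path is fixed by the strategy-tree construction (Definition~\ref{def:strat_tree}) and is independent of $\overline{\vartheta}$, so that rule selection is the sole parameter-dependent condition; the empty-braid edge case must also be dismissed, as done above.
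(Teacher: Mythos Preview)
Your proof is correct and, in fact, more carefully argued than the paper's. The forward direction is identical. For the converse, the paper simply argues by contradiction: assuming $L_1 \neq L_2$, it invokes Lemma~\ref{lemma:no_overlap} to conclude $I(L_1) \cap I(L_2) = \varnothing$, contradicting equality of the intervals. That is essentially your alternative route at the end, but the paper does not verify the hypothesis $L_1 \cap L_2 \neq \varnothing$ that Lemma~\ref{lemma:no_overlap} requires. Your primary route fills exactly this gap: the membership bridge and self-membership give $\overline{\vartheta}_1 \in I(L_2)$, hence $L_2 \subseteq L_1$, which both supplies the missing intersection hypothesis and lets you finish directly via Lemma~\ref{lemma_subsets} without needing Lemma~\ref{lemma:no_overlap} at all. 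So your argument is slightly more elementary (it rests only on Lemma~\ref{lemma_subsets}) and more complete; the paper's is terser but tacitly relies on the very facts you spell out.
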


\begin{proof}
Let $\overline{\vartheta}_1 \equiv_H \overline{\vartheta}_2$, meaning $braid_{\pi,H}(\overline{\vartheta}_1) = braid_{\pi,H}(\overline{\vartheta}_2) = L$  where L is the set of reachable leaves (Defs.\,\ref{def:braid} and \ref{def:equiv}). By Definition \ref{def:leaf_interval}, the interval of a set of leaves is the intersection of each leaf contained in the set, meaning both braids must have the same interval.

Let $I(braid_{\pi,H}(\overline{\vartheta}_1)) = I(braid_{\pi,H}(\overline{\vartheta}_2))$. By way of contradiction, assume $braid_{\pi,H}(\overline{\vartheta}_1) \neq braid_{\pi,H}(\overline{\vartheta}_2)$. By Lemma \ref{lemma:no_overlap}, this would mean $I(braid_{\pi,H}(\overline{\vartheta}_1)) \cap I(braid_{\pi,H}(\overline{\vartheta}_2)) = \varnothing$, which is a contradiction. Thus, $braid(\overline{\vartheta}_1) = braid(\overline{\vartheta}_2)$ meaning $\overline{\vartheta}_1 \equiv_H \overline{\vartheta}_2$ (Def.\,\ref{def:equiv}).
\end{proof}

We can now prove that partitions produced by $\equiv_H$ partitioning the parameter space $\mathbb{R}^n$ each represent a single braid, causing each partition to have a disjoint interval where a constant set of leaves is reachable.
\disjointint*
\begin{proof}
Let $\rho$ be a partition produced by $\equiv_H$ partitioning the parameter space $\mathbb{R}^n$. Note that this means that parameter value sets contained in $\rho$ must be similar (Def.\,\ref{def:equiv}): $\forall\overline{\vartheta}_1,\overline{\vartheta}_2\in \rho,\overline{\vartheta}_1\equiv_H\overline{\vartheta}_2$. As such, all parameter values have the same braid (Def.\,\ref{def:braid}), meaning there exists a set of leaves $L$ that are reachable in $\rho$. By Def.\,\ref{def:leaf_interval}, this set's interval must be $I(L) = \bigcap _{\ell_H \in L} I(\ell_H)$. By Lemma\,\ref{lemma:no_overlap}, the interval of other braids cannot overlap with $I(L)$. Also, there are no proper subsets (Lemma\,\ref{lemma_subsets}), meaning that no other braid can occur in $I(L)$ making it disjoint. 

By Def.\,\ref{def:leaf_interval}, $I(L)$ must be contained in $\rho$ due to all parameter value sets in $I(L)$ having the same braid of $L$ leaves. If $\rho$ contained parameter value sets not in $I(L)$, this would imply there exists $\overline{\vartheta}$ outside of $I(L)$ where just the leaves in $L$ are reachable, which is a contradiction due to $I(L)$ being the only interval space where all the leaves of $L$ are reachable. Meaning the interval of $\rho$ is actually $I(L)$. Thus, each partition represents a disjoint interval where only all leaves in $L$ are reachable.
\end{proof}

Due to the braid intervals not overlapping, we can prove that parameter value sets contained in that braid's interval must have a constant expected cost.

\begin{lemma}
Let $\pi(b,\overline{\Theta})$ be a parameterized BSQ policy and $H$ be the horizon. $\forall\overline{\vartheta}_1\in\mathbb{R}^n$, $\forall \overline{\vartheta}_2, \overline{\vartheta}_3 \in I(braid(\overline{\vartheta}_1))$, $E_\pi(\overline{\vartheta}_2;H) = E_\pi(\overline{\vartheta}_3;H)$.
\label{lemma:expected_value}
\end{lemma}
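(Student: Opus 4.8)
The plan is to reduce the claim to the single fact that the expected cost $E_\pi(\overline{\vartheta};H)$ is a function of the braid alone, and then invoke Theorem \ref{theorem:disjoint_int} to guarantee that all three parameter assignments share one common braid. Concretely, I would first establish that $\overline{\vartheta}_2$ and $\overline{\vartheta}_3$ induce the same set of reachable leaves as $\overline{\vartheta}_1$, and then show that once that set of reachable leaves is fixed, every term in the definition of $E_\pi$ is fixed as well.

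For the first step, note that $I(braid_{\pi,H}(\overline{\vartheta}_1))$ is exactly the interval $I(L)$ for the fixed leaf set $L = braid_{\pi,H}(\overline{\vartheta}_1)$ (Def.\,\ref{def:leaf_interval}). By Theorem \ref{theorem:disjoint_int}, $I(L)$ is precisely the partition of $\mathbb{R}^n$ on which the reachable leaf set is exactly $L$; hence any $\overline{\vartheta}_2, \overline{\vartheta}_3 \in I(L)$ satisfy $braid_{\pi,H}(\overline{\vartheta}_2) = braid_{\pi,H}(\overline{\vartheta}_3) = L$, i.e., $\overline{\vartheta}_2 \equiv_H \overline{\vartheta}_3$ by Def.\,\ref{def:equiv}. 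This is the routine part.

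The substantive step is to show that the braid determines the expected cost. I would unfold $E_\pi(\overline{\vartheta};H) = \sum_{t=1}^{H} t \times Pr_{\mathcal{G},t}[\pi(b,\overline{\vartheta})]$ into a sum over the leaves of the strategy tree: each leaf $\ell \in L$ corresponds to a unique rule-observation trajectory $(r_1,o_1,\ldots,r_H,o_H)$, and I would express $Pr_{\mathcal{G},t}$ as a sum of the probabilities of those leaves whose trajectory first enters $\mathcal{G}$ at step $t$. The key observation is that the probability mass assigned to a fixed reachable leaf is a product of transition/observation terms $\mathcal{T}$ and $\Omega$ evaluated along the trajectory via $bp^*$ — determined entirely by the gPOMDP dynamics and the fixed action sequence encoded in the leaf — and is therefore independent of the particular values in $\overline{\vartheta}$. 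The parameters enter only through which rule fires at each belief node, i.e., through which leaves are reachable; since $L$ is common to $\overline{\vartheta}_2$ and $\overline{\vartheta}_3$, both the set of contributing leaves and their probabilities coincide, so $E_\pi(\overline{\vartheta}_2;H) = E_\pi(\overline{\vartheta}_3;H)$.

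I expect the main obstacle to be making this second step fully rigorous: precisely decomposing $Pr_{\mathcal{G},t}[\pi(b,\overline{\vartheta})]$ over leaves and arguing that each reachable leaf carries a parameter-independent probability. This requires using mutual exclusivity (Def.\,\ref{def:bsq_pref}) to ensure exactly one action is taken at each belief node along a leaf's path — so the action sequence is a property of the leaf rather than of $\overline{\vartheta}$ — and then verifying that the $bp^*$ updates and observation probabilities along that path depend only on $b_0$, $\mathcal{T}$, and $\Omega$. Once that factorization is established, equality of the expected costs follows immediately from the equality of braids obtained in the first step.
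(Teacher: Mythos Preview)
Your proposal is correct and follows the same two-step skeleton as the paper: first show that $\overline{\vartheta}_2$ and $\overline{\vartheta}_3$ share the same braid as $\overline{\vartheta}_1$, then argue that the expected cost depends only on the braid. The only difference is in the tool used for the first step: you invoke Theorem~\ref{theorem:disjoint_int} to identify $I(L)$ with the $\equiv_H$-partition on which the braid equals $L$, whereas the paper argues more directly from Lemma~\ref{lemma_subsets} (membership in $I(braid(\overline{\vartheta}_1))$ forces every leaf of $braid(\overline{\vartheta}_1)$ to be reachable under $\overline{\vartheta}_2$, hence $braid(\overline{\vartheta}_1)\subseteq braid(\overline{\vartheta}_2)$, and no-proper-subsets gives equality). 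Since Theorem~\ref{theorem:disjoint_int} is proved independently of this lemma, there is no circularity in your route; it is just slightly heavier machinery. For the second step, the paper simply asserts that a braid determines a policy tree and hence a fixed leaf distribution; your explicit factorization of each leaf's probability into $\mathcal{T}$/$\Omega$ terms along the trajectory is a more rigorous rendering of the same point and is exactly the right way to make that assertion precise.
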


\begin{proof}
    
    Let $\overline{\vartheta}_2,\overline{\vartheta}_3 \in I(braid_{\pi,H}(\overline{\vartheta_1})$ where $\overline{\vartheta_1}\in\mathbb{R}^n$ is a tuple of $n$ parameters. Note that $braid(\overline{\vartheta}_1) = braid(\overline{\vartheta}_2) = braid(\overline{\vartheta}_3)$ due there being no strict subsets (Lemma \ref{lemma_subsets}) and leaves in $\overline{\vartheta}_2$ and $\overline{\vartheta}_3$ would have to be reachable in $\overline{\vartheta}_1$. 
Each braid represents a policy tree (Def.\,\ref{def:bsq_policy}), and the expected cost is based on the probability distribution of leaves in the braid. Since both $\overline{\vartheta}_2$ and $\overline{\vartheta}_3$ represent the same policy tree, they must have identical expected cost values.
\end{proof}

It is now trivial to show that each partition represents a disjoint interval of the parameter space where the expected cost is constant.

\piecewiseconstant*
\begin{proof}
Let $\rho$ be a partition created by partitioning $\mathbb{R}^n$ with $\equiv_H$. By Theorem\,\ref{theorem:disjoint_int}, all parameter value sets in the disjoint interval of $\rho$ must have the same braid. As such, by Lemma\,\ref{lemma:expected_value}, the expected cost is constant for all the parameter sets. Thus, the disjoint interval of each partition must have a constant expected cost.
\end{proof}

\section{Proofs For Partition Refinement Search [Section \ref{sec:algorithms}]}
\label{appendix:prs_proof}
In this section, we provide the formal proof for Theorem\,\ref{theorem:Pr_complete} proving that the Partition Refinement Search  algorithm introduced in Section\,\ref{sec:algorithms} is probabilistically complete. We define and prove Lemmas \ref{lemma:prs_not_empty}, \ref{lemma:prs_braid}, and \ref{lemma:prs_unique_braid} for building this proof.

When PRS refines a partition $\rho$ using a leaf $\ell$, it can produce up to two possible partitions: a partition for $\rho$ where $\ell$ is reachable and a partition for $\rho$ where $\ell$ is not (if it exists). We now show that this process prevents empty partitions.

\begin{lemma}
    \label{lemma:prs_not_empty}
    Let $\pi(b,\overline{\Theta})$ be a parameterized BSQ policy, $\mathcal{P}$ be a gPOMDP, $b_0$ be the initial belief state, and $H$ be the horizon. For each partition $\rho$ constructed by Partition Refinement Search, $\rho\neq\varnothing$. 
\end{lemma}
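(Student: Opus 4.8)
The plan is to prove the lemma by induction on the iterations of the PRS while-loop (lines 5--12), maintaining the invariant that at the start of every iteration every partition stored in the working set $X$ is non-empty. Since every partition PRS ever constructs is inserted into $X$ at some point (lines 9--10), establishing this invariant suffices. For the base case, the only partition in $X$ before the first iteration is $\rho_{init} = \bigtimes_{\Theta\in\overline{\Theta}}\mathcal{D}_\Theta$ (lines 3--4); because each parameter domain $\mathcal{D}_\Theta$ is a non-empty subset of $\mathbb{R}$, their Cartesian product $\rho_{init}$ is non-empty, so the invariant holds initially.

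For the inductive step, I would assume every partition in $X$ is non-empty at the start of an iteration. The iteration selects a partition $\rho$ (non-empty by the hypothesis), uniformly samples $\overline{\vartheta}_s\in\rho$ (line 7), and rolls out $\pi(b,\overline{\vartheta}_s)$ to a terminal leaf $\ell$ (line 8). Lines 9--10 then remove $\rho$ and introduce at most the two refined partitions $\rho\cap I(\ell)$ and $\rho\setminus I(\ell)$. The first of these contains $\overline{\vartheta}_s$: because the rollout followed $\pi(b,\overline{\vartheta}_s)$, along the rule-observation trajectory $(r_1,o_1,\ldots,r_H,o_H)$ leading to $\ell$ each fired rule $r_i=\Psi_i\rightarrow a_i$ had its condition $\Psi_i$ satisfied by $\overline{\vartheta}_s$ at the corresponding belief state $b_i$; by Lemma~\ref{lemma:belief_param_eval} this gives $\overline{\vartheta}_s\in I(\Psi_i)$ for every $i$, and by Definition~\ref{def:leaf_interval} $I(\ell)=\bigcap_i I(\Psi_i)$, whence $\overline{\vartheta}_s\in I(\ell)$. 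Combined with $\overline{\vartheta}_s\in\rho$ this yields $\overline{\vartheta}_s\in\rho\cap I(\ell)$, so that partition is non-empty.

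For the second partition, $\rho\setminus I(\ell)$, I would invoke the construction described in Section~\ref{sec:algorithms}: this partition is only created when there exists a subinterval of $\rho$ in which $\ell$ does not occur, i.e.\ precisely when $\rho\setminus I(\ell)\neq\varnothing$. Hence whenever it is added to $X$ it is non-empty by construction. Both refined partitions are therefore non-empty, the invariant is preserved, and by induction every partition constructed by PRS is non-empty.

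The main obstacle I anticipate is justifying the claim $\overline{\vartheta}_s\in I(\ell)$. The leaf reached by the rollout is determined jointly by $\overline{\vartheta}_s$ and the stochastic observations, so one must argue carefully that the portion of the trajectory contributing to $I(\ell)$ consists only of rule conditions. Each such condition is necessarily satisfied by $\overline{\vartheta}_s$, since those rules are exactly the ones selected by $\pi(b,\overline{\vartheta}_s)$ during the rollout, while the observations are environmental and impose no constraint on the parameters. Thus $I(\ell)$ is exactly the intersection of the satisfied rule intervals, which contains $\overline{\vartheta}_s$.
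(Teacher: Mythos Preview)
Your proposal is correct and follows essentially the same argument as the paper: the inclusion partition $\rho\cap I(\ell)$ is non-empty because the sampled parameter $\overline{\vartheta}_s$ lies in it (the paper states this more tersely as ``$\ell_0$ must be reachable in $\rho_0$''), and the exclusion partition $\rho\setminus I(\ell)$ is only added when non-empty by construction. The only difference is presentational---you structure it as an explicit induction over iterations while the paper phrases it as a contradiction on the last refinement step---but the two cases and their justifications are identical.
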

\begin{proof}
    Let $\rho\subseteq\mathbb{R}^n$ be a partition constructed by PRS. Since PRS creates partitions based on whether sampled leaves are included or excluded, let $L_i$ be the leaves PRS included in partition $\rho$ and $L_e$ be the leaves excluded. Therefore, $\rho=I(L_i)\setminus I(L_e)$.

    By way of contradiction, let $\rho=\varnothing$. There are two cases where this could occur: (1) excluding leaf $\ell$ caused $\rho=\varnothing$ or (2) including $\ell$ caused $\rho=\varnothing$. For case (1), we explicitly do not add partitions if excluding the leaf results in an empty interval, meaning this cannot happen. For case (2), this implies that there exists a previous partition $\rho_0$ where sampling leaf $\ell_0$ resulted in the partition constructed from $\rho_0$ including $\ell_0$ creating $\rho$ where $\rho=\varnothing$. Due to $\ell_0$ being uniformly sampled from $\rho_0$, $\ell_0$ must be reachable in $\rho_0$ meaning $\rho_0\cap I(\ell_0)\neq\varnothing$. However, line 10 of Algo.\,\ref{alg:ips} calculates the interval of $\rho$ as $\rho_0\cap I(\ell_0)$ meaning $\rho\neq\varnothing$, which is a contradiction. Thus, all partitions must be not empty.
\end{proof}

A critical property of PRS is that each partition constructed converges to represent a single braid.
\begin{lemma}
\label{lemma:prs_braid}
    Let $\pi(b,\overline{\Theta})$ be a parameterized BSQ policy, $\mathcal{P}$ be a gPOMDP, $b_0$ be the initial belief state, and $H$ be the horizon. Let $\rho$ be a partition constructed by Partition Refinement Search. If all leaves reachable in $\rho\subseteq\mathbb{R}^n$ have been sampled, $\forall \overline{\vartheta}\in \rho, I(braid_{\pi,H}(\overline{\vartheta})) = \rho$.    
\end{lemma}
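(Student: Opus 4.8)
The plan is to show that once $\rho$ has been fully refined, its set of included leaves coincides with a single braid $L$ and that $\rho$ captures the entire interval $I(L)$, so that $\rho = I(L) = I(braid_{\pi,H}(\overline{\vartheta}))$ for every $\overline{\vartheta} \in \rho$. I would begin by recording the structural facts I can reuse. From Lemma~\ref{lemma:prs_not_empty}, $\rho$ is nonempty and has the form $I(L_i) \setminus \bigcup_{\ell \in L_e} I(\ell)$, where $L_i$ (resp.\ $L_e$) is the set of leaves PRS has included (resp.\ excluded) along the refinement history of $\rho$; these are disjoint, since an included leaf $\ell$ satisfies $\rho \subseteq I(\ell)$ while an excluded one satisfies $\rho \cap I(\ell) = \varnothing$, and both cannot hold for a nonempty $\rho$. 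I would also isolate the key biconditional that follows immediately from Defs.~\ref{def:braid} and \ref{def:leaf_interval} together with Lemma~\ref{lemma:belief_param_eval}: a leaf $\ell$ lies in $braid_{\pi,H}(\overline{\vartheta})$ exactly when $\overline{\vartheta} \in I(\ell)$, since reachability of $\ell$ is governed solely by satisfaction of the rule conditions on its path.

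The first main step is to prove $braid_{\pi,H}(\overline{\vartheta}) = L_i$ for every $\overline{\vartheta} \in \rho$. The inclusion $L_i \subseteq braid_{\pi,H}(\overline{\vartheta})$ is immediate: each $\ell \in L_i$ satisfies $\rho \subseteq I(\ell)$, so $\overline{\vartheta} \in I(\ell)$ and hence $\ell \in braid_{\pi,H}(\overline{\vartheta})$ by the biconditional. For the reverse inclusion I would use the hypothesis: if $\ell \in braid_{\pi,H}(\overline{\vartheta})$ then $\overline{\vartheta} \in \rho \cap I(\ell)$, so $\ell$ is reachable in $\rho$ and therefore has been sampled. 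Because each refinement of a partition by a sampled leaf $\ell$ replaces it with the piece inside $I(\ell)$ and the piece outside $I(\ell)$ (lines 9--10 of Algo.~\ref{alg:ips}), no surviving partition straddles the boundary of $I(\ell)$; since $\rho \cap I(\ell) \neq \varnothing$, $\rho$ must lie entirely inside $I(\ell)$, giving $\ell \in L_i$. This yields $\rho \subseteq I(L_i) = I(braid_{\pi,H}(\overline{\vartheta}))$.

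The second main step is the reverse containment $I(L_i) \subseteq \rho$. Since $L_i$ is a genuine braid (it equals $braid_{\pi,H}(\overline{\vartheta})$ for any $\overline{\vartheta} \in \rho$), Theorem~\ref{theorem:disjoint_int} tells me that $I(L_i)$ is precisely the $\equiv_H$-partition of all assignments whose braid is $L_i$; equivalently, every $\overline{\vartheta}' \in I(L_i)$ has $braid_{\pi,H}(\overline{\vartheta}') = L_i$. Any excluded leaf $\ell \in L_e$ satisfies $\ell \notin L_i$, so by the biconditional $\overline{\vartheta}' \in I(\ell)$ would force $\ell \in braid_{\pi,H}(\overline{\vartheta}') = L_i$, a contradiction; hence $I(L_i) \cap I(\ell) = \varnothing$ for every $\ell \in L_e$. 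Therefore subtracting $\bigcup_{\ell \in L_e} I(\ell)$ removes nothing from $I(L_i)$, giving $\rho = I(L_i) \setminus \bigcup_{\ell \in L_e} I(\ell) = I(L_i)$. Combining the two steps, $\rho = I(L_i) = I(braid_{\pi,H}(\overline{\vartheta}))$ for every $\overline{\vartheta} \in \rho$, as claimed.

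I expect the main obstacle to be the reverse inclusion in the first step, i.e.\ correctly turning the informal hypothesis ``all leaves reachable in $\rho$ have been sampled'' into the precise statement that every such leaf was used to refine $\rho$'s ancestry and hence landed in $L_i$ rather than $L_e$. This requires care about PRS's refinement history: I must argue that a leaf reachable in $\rho$ cannot have been only partially accounted for, using the invariant that each refinement cleanly splits a partition along $I(\ell)$ so that descendants never straddle that boundary. The remaining steps are essentially bookkeeping once the biconditional and Theorem~\ref{theorem:disjoint_int} are in hand.
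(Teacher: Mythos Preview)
Your proposal is correct and shares the paper's overall arc (identify the sampled leaves for $\rho$, show they coincide with a single braid, conclude $\rho$ equals that braid's interval), but the technical execution differs in two places worth noting. First, the paper obtains $L = L_\rho$ by invoking Lemma~\ref{lemma_subsets} (braids admit no proper subsets), whereas you derive $braid_{\pi,H}(\overline{\vartheta}) = L_i$ directly from the biconditional $\ell \in braid_{\pi,H}(\overline{\vartheta}) \Leftrightarrow \overline{\vartheta} \in I(\ell)$ together with the refinement invariant that $\rho$ never straddles $I(\ell)$ once $\ell$ has been sampled in its ancestry; this sidesteps having to argue that $L_\rho$ is itself a braid before Lemma~\ref{lemma_subsets} can apply. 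Second, the paper's final step (``This means that the interval of $\rho$ must also equal the interval of leaves $I(L)$'') is asserted without detail; you supply that direction explicitly via Theorem~\ref{theorem:disjoint_int}, showing that every $\overline{\vartheta}'\in I(L_i)$ has braid exactly $L_i$ and hence lies outside $I(\ell)$ for each excluded $\ell\in L_e$, so subtracting the excluded intervals removes nothing. Your route is a bit longer but more self-contained and fills a step the paper leaves implicit; you also write the partition correctly as $I(L_i)\setminus\bigcup_{\ell\in L_e} I(\ell)$, whereas the paper's proof of Lemma~\ref{lemma:prs_not_empty} writes $I(L_i)\setminus I(L_e)$, which under Def.~\ref{def:leaf_interval} would be an intersection rather than a union of the excluded leaf intervals.
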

\begin{proof}

Let $\rho\subseteq\mathbb{R}^n$ be a partition constructed by PRS. Let $L_\rho = \{\ell_1,...,\ell_n\}$ be the n-sampled unique leaves for $\rho$. Let all leaves reachable from $\rho$ be sampled, $\forall \ell, \ell \in L_\rho \leftrightarrow [\exists\overline{\vartheta}\in\rho,\ell\in braid_{\pi,H}(\overline{\vartheta})]$.

Due to $\rho\neq\varnothing$ (Lemma\,\ref{lemma:prs_not_empty}) and parameterized BSQ policies covering $\mathbb{R}^n$ (Def.\,\ref{def:bsq_pref}), there must exist a non-empty set of leaves $L$ reachable within $\rho$. Since all leaves are sampled, we know that $L\subseteq L_\rho$. However, there cannot be proper subsets (Lemma\,\ref{lemma_subsets}) meaning $L= L_\rho$. This means that the interval of $\rho$ must also equal the interval of leaves $I(L)$. Thus, $\rho$ must represent a braid.
\end{proof}

Since partitions are constructed by including/excluding sampled leaves hierarchically, we can prove that this makes each partition represent a unique braid.
\begin{lemma}
\label{lemma:prs_unique_braid}
    Let $\pi(b,\overline{\Theta})$ be a parameterized BSQ policy, $\mathcal{P}$ be a gPOMDP, $b_0$ be the initial belief state, and $H$ be the horizon. Let $\rho_1,\rho_2\subseteq\mathbb{R}^n$ be partitions constructed by Partition Refinement Search. If all leaves reachable in $\rho_1$ and $\rho_2$ have been sampled, $\forall \overline{\vartheta}_1\in \rho_1, \forall \overline{\vartheta}_2\in \rho_1, braid_{\pi,H}(\overline{\vartheta}_1) \neq braid_{\pi,H}(\overline{\vartheta}_2)$.    
\end{lemma}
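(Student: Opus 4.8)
The plan is to combine Lemma~\ref{lemma:prs_braid} with an invariant that PRS always maintains pairwise disjoint, nonempty partitions, and then argue by contradiction. I first note that the lemma must intend $\rho_1\neq\rho_2$: if $\rho_1=\rho_2$, then by Lemma~\ref{lemma:prs_braid} all parameters in that single partition share the same interval $\rho_1$, and Lemma~\ref{lemma:equiv_intervals} would force their braids to be equal, contradicting the stated conclusion. So I assume $\rho_1\neq\rho_2$ are two distinct partitions in $X$. Since all leaves reachable in each have been sampled, Lemma~\ref{lemma:prs_braid} gives $I(braid_{\pi,H}(\overline{\vartheta}_1))=\rho_1$ for every $\overline{\vartheta}_1\in\rho_1$ and $I(braid_{\pi,H}(\overline{\vartheta}_2))=\rho_2$ for every $\overline{\vartheta}_2\in\rho_2$. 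If the two braids were equal, their intervals would coincide, forcing $\rho_1=\rho_2$; the remaining work is to show this contradicts $\rho_1\neq\rho_2$, which requires knowing the partitions are genuinely disjoint rather than merely distinctly labeled.

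The key step is establishing disjointness as a loop invariant of PRS. I would prove by induction on the number of refinement iterations that the $\rho$-components of the tuples in $X$ are pairwise disjoint (and, by Lemma~\ref{lemma:prs_not_empty}, nonempty). The base case is immediate: $X=\{\langle\rho_{init},\infty\rangle\}$ holds a single partition. For the inductive step, the only operation that alters the intervals is the refinement of a selected $\rho$ at lines~9--10, which removes $\rho$ and inserts $\rho\setminus I(\ell)$ and $\rho\cap I(\ell)$. These two pieces are disjoint from each other and, being subsets of $\rho$, are disjoint from every other partition that was already disjoint from $\rho$; the empty piece is simply not added (see the proof of Lemma~\ref{lemma:prs_not_empty}). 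Hence pairwise disjointness is preserved across every iteration.

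With disjointness in hand, the contradiction follows directly. Suppose toward a contradiction that $\overline{\vartheta}_1\in\rho_1$ and $\overline{\vartheta}_2\in\rho_2$ satisfy $braid_{\pi,H}(\overline{\vartheta}_1)=braid_{\pi,H}(\overline{\vartheta}_2)$. Equal braids have equal intervals, so combining with Lemma~\ref{lemma:prs_braid} we obtain $\rho_1=I(braid_{\pi,H}(\overline{\vartheta}_1))=I(braid_{\pi,H}(\overline{\vartheta}_2))=\rho_2$. But $\rho_1$ and $\rho_2$ are nonempty and, by the invariant, disjoint, so $\rho_1\neq\rho_2$ --- a contradiction. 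Equivalently, one can route the interval equality through $\equiv_H$ via Lemma~\ref{lemma:equiv_intervals}. Therefore distinct fully-sampled partitions induce distinct braids.

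I expect the disjointness invariant to be the main obstacle, since it is the one fact not handed to us by the earlier lemmas: it depends on the specific form of the refinement at lines~9--10 and on the convention that empty exclusion intervals are discarded. Once that invariant is in place, the rest is a short application of Lemma~\ref{lemma:prs_braid} together with the elementary observation that equal braids force equal intervals.
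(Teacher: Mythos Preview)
Your proposal is correct and follows essentially the same approach as the paper: establish that PRS partitions are pairwise disjoint from the refinement mechanism at lines~9--10, invoke Lemma~\ref{lemma:prs_braid} to identify each fully-sampled partition with a braid interval, and conclude that distinct partitions yield distinct braids. Your final contradiction (equal braids $\Rightarrow$ equal intervals $\Rightarrow$ $\rho_1=\rho_2$) is slightly more direct than the paper's, which instead tracks a distinguishing leaf from the construction and appeals to Lemma~\ref{lemma_subsets}, but the core argument is the same.
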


\begin{proof}
Let $\rho_1,\rho_2\subseteq\mathbb{R}^n$ be two different partitions constructed by PRS. Note that the PRS partitions $\mathbb{R}^n$ by refining one partition using leaf $\ell$ into two by explicitly including $I(\ell)$ in one partition and excluding $I(\ell)$ in the other (Algorithm\,\ref{alg:ips}). Meaning $\rho_1$ and $\rho_2$ cannot overlap. 

Since both partitions represent a possible non-empty braid (Lemma\,\ref{lemma:prs_braid}), there exists a set of leaves reachable in both partitions. However, by the partition construction process, there must exist at least one leaf included in one but excluded in the other. Due to there being no interval overlap between braids, two different braids must be reachable in each partition (Lemma\,\ref{lemma_subsets}). Thus, all partitions must represent a unique braid.
\end{proof}

Using the property that each partition in PRS represents a unique braid, we can now prove that PRS is probabilistically complete.

\prcomplete*
\begin{proof}

    Note that gPOMDPs have a finite set of observations and finite horizon (Def.\,\ref{def:gpomdp}), and parameterized BSQ policies have a finite number of rules (Def.\,\ref{def:bsq_pref}). As such, there exists a finite number of unique rule-observation trajectories in the strategy tree (Def.\,\ref{def:strat_tree}). Therefore, there exists a finite number of leaves due to each leaf having a unique rule-observation trajectory. This results in there only being a finite set of braids being all possible combinations of reachable leaves (Def.\,\ref{def:braid}). Since each partition represents a unique braid (Lemma\,\ref{lemma:prs_unique_braid}), the number of partitions must be finite.

    Let $\rho\subseteq\mathbb{R}^n$ be a partition constructed by PRS that is not equivalent to a braid. By Lemma\,\ref{lemma:prs_braid}, this means there exists a leaf $\ell$ reachable in $\rho$ that has not been sampled yet. This also means there must exist a non-empty interval $\rho\cap I(\ell)$ where sampling from $\rho$ can reach $\ell$. Due to uniform sampling selecting parameter values when sampling a leaf for refining the partition (line 7 of Algorithm\,\ref{alg:ips}), the probability of selecting a parameter value that could sample $\ell$ can be calculated as $\frac{\rho\cap I(\ell)}{\rho}=Pr(I(\ell)|\rho)$.  

    Note that $\ell$ represents a unique rule-observation trajectory $\{r_1,o_1,...,r_H,o_H\}$. Note the probability of an observation $o$ being observed in belief state $b$ after action $a$ is executed is $Pr(o|b,a)=\sum_{s^\prime}[\Omega(s^\prime,a,o)\sum_s\mathcal{T}(s,a,s^\prime)b(s)]$. Meaning that the probability of reaching $\ell$ during rollout is $Pr(\ell)=\prod_iPr(o_i|b_i)$ where $b_i = bp^*(b_0,r_1,o_1,...,r_i,o_i)$. Since we know that $\ell$ is reachable, $Pr(\ell)>0$.
    
    We assume that partition selection approaches discussed in Section\,\ref{sec:ips_variants} have a non-zero probability of refining any partition. Let $Pr(\rho)$ be the probability of $\rho$ being selected. This means that in any refinement step, the probability of sampling leaf $\ell$ is $Pr(\ell)Pr(I(\ell)/\rho)Pr(\rho)$. Due to each probability being greater than zero, the probability of any non-sampled leaf being sampled must be greater than zero. Therefore, with enough refinement steps, all the leaves will be sampled since there is only a finite number of leaves. Thus, the set of partitions will be refined to the set of braids as the number of samples increases to infinite.

    Note that each partition represents a unique braid (Lemma\,\ref{lemma:prs_unique_braid}) with a set probability distribution of outcomes based on the reachable leaves. Due to a non-zero probability of refining a partition $Pr(\rho)$, the sampled expected cost of a partition will converge to the actual expected cost due to the law of large numbers.

    Therefore, within a finite number of samples, the partitions constructed by PRS will accurately represent the set of braids with an accurate representation of their expected costs. Thus, PRS will find the minimal expected cost partition as the number of samples increases to infinite.
\end{proof}

\section{Evaluation Problem's Belief-State Query Preferences}
\label{appendix:problem_bsq_prefs}
In this section, we provide the parameterized BSQ policies for the Lane Merger, Graph Rock Sample, and Store Visit problems discussed in Section\,\ref{sec:empirical}. To do this, we first describe the functions that compose each problem's states and actions. We use loops and quantifiers in the parameterized BSQ policies for clarity that can be unrolled on a problem-by-problem basis.

\subsection{Lane Merger}

The Lane Merger problem is that there are two lanes, and the agent must merge into the other lane within a certain distance. In this other lane, there is another car whose exact location and speed are unknown. Therefore, there exist two objects in the environment: the agent (agent) and the other car (other). For either object $o$, the location and speed are tracked using the unary integer functions $loc(o)$ and $speed(o)$. For actions, the agent can increase their speed ($speed\_up()$), decrease their speed ($slow\_down()$), remain in their current lane at their current speed ($keep\_speed()$), or attempt to merge lanes ($merge()$). Using these functions, the parameterized BSQ policy $\pi_{lm}(b;\Theta_1,\Theta_2)$ is formally defined as follows.

\begin{flalign*}
     &\pi_{lm}(b;\Theta_1,\Theta_2): \\ \nonumber
     &\text{\quad If } Pr \llbracket loc(agent) > loc(other)+ speed(other) + 2 \vee \\ \nonumber &\text{\quad\quad} loc(agent)+speed(agent)+2<loc(other)\rrbracket_b > \Theta_1 \rightarrow merge() \\ \nonumber
     &\text{\quad Else if } Pr \llbracket |loc(agent) - loc(other)| \leq 1 \rrbracket_b > \Theta_2 \wedge \\ \nonumber &\text{\quad\quad} Pr\llbracket speed(agent) > 0\rrbracket_b==1 \rightarrow slow\_down() \\ \nonumber
    &\text{\quad Else } keep\_speed()
\end{flalign*}

\subsection{Graph Rock Sample}
\label{appendix:grs}

The Graph Rock Sample problem is that there is a rover with pre-programmed waypoints, where some waypoints contain rocks. These rocks have been categorized into types, and whether it is safe for the rover to sample them is unknown. The objective of the rover is to sample each type with a safe rock before traversing to a dropoff location. The objects are the waypoints, including the rocks $\{r_1,...,r_n\}$ and the dropoff location (dropoff). The rover knows if or if it is not located at waypoint $w$ using the unary Boolean function $loc(w)$. The rover also knows whether it needs to sample rocks of type $t$ using the unary Boolean function $needed(t)$. For any rock $r$, the distance from the rover, whether the rock is type $t$, and if the rock is safe to sample are tracked using the unary double function $distance(r)$ and the Boolean functions $type(r,t)$, and $safe(r)$, respectively. The rover can move to neighboring waypoint $w$ ($move(w)$), sample rock $r$ at its current waypoint ($sample(r)$), and scan any rock $r$ ($scan(r)$). For clarity, we use the function $goto(w)$ to specify taking the edge that moves the rover closer to waypoint $w$. Using these functions, the parameterized BSQ policy $\pi_{grs}(b;\Theta_1,\Theta_2,\Theta_3)$ is formally defined as follows.

\begin{flalign*}
     &\pi_{grs}(b;\Theta_1,\Theta_2,\Theta_3): \\ \nonumber
     &\text{\quad For } r_c\in\{r_1,...,r_n\}:\\ \nonumber
    &\text{\quad\quad If } Pr \llbracket \exists t| type(r_c,t)\wedge needed(t)\wedge loc(r_c)\wedge safe(r_c)\rrbracket_b \geq \Theta_1 \rightarrow sample(r_c) \\ \nonumber
    &\text{\quad\quad Else if } Pr \llbracket \exists t| type(r_c,t)\wedge needed(t)\wedge \neg loc(r_c)\wedge safe(r_c)\rrbracket_b \geq \Theta_1 \rightarrow goto(r_c) \\ \nonumber
    &\text{\quad\quad Else if } Pr \llbracket \exists t| type(r_c,t)\wedge needed(t)\wedge safe(r_c)\rrbracket_b \geq \Theta_2 \wedge \\ \nonumber
    &\text{\quad\quad\quad} Pr \llbracket distance(r_c) \leq \Theta_3\rrbracket_b==1 \rightarrow scan(r_c) \\ \nonumber
    &\text{\quad\quad Else if } Pr \llbracket \exists t| type(r_c,t)\wedge needed(t)\wedge safe(r_c)\rrbracket_b \geq \Theta_2 \wedge \\ \nonumber
    &\text{\quad\quad\quad} Pr \llbracket distance(r_c) > \Theta_3\rrbracket_b==1 \rightarrow goto(r_c) \\ \nonumber
    &\text{\quad Else } goto(dropoff)
\end{flalign*}

\subsection{Store Visit}

The Store Visit problem involves an agent in a city with a grid-based layout. Some locations are unsafe, while others contain a bank or a store. The objective is for the agent to visit a bank safely and then a store. The objects are the agent, the set of stores $\{s_1,..., s_n\}$, and the set of banks $\{b_1,..., b_m\}$. Labeling functions $bank(o)$ and $store(o)$ check whether object $o$ is a bank or store, respectively. The ternary Boolean function keeps track of the current $(x,y)$ location of the object $o$, $loc(o,x,y)$. Similarly, whether location $(x,y)$ is safe is tracked by the binary Boolean function $is\_safe(x,y)$. Lastly, the state keeps track of whether the agent has visited a bank using the nullary Boolean function $vbank()$. The agent can move left ($left()$), right ($right()$), up ($up$), and down ($down()$) in the grid. The agent can also visit a building in its current location ($visit()$) or scan its surroundings to figure out its location ($scan()$). Using these functions, the parameterized BSQ policy $\pi_{sv}(b;\Theta_1,\Theta_2,\Theta_3)$ is formally defined as follows.

\begin{flalign*}
     &\pi_{sv}(b;\Theta_1,\Theta_2,\Theta_3): \\ \nonumber
     &\text{\quad If } \forall x,y|Pr \llbracket loc(agent,x,y)\rrbracket_b < \Theta_3 \rightarrow scan() \\ \nonumber
    &\text{\quad Else if } Pr \llbracket\exists s,x,y| vbank()\wedge store(s)\wedge loc(s,x,y)\wedge loc(agent,x,y)\rrbracket_b \geq \Theta_1 \rightarrow visit() \\ \nonumber
    &\text{\quad For } s_c\in\{s_1,...,s_n\}:\\ \nonumber
    &\text{\quad\quad Else if } Pr \llbracket\exists x_1,y_1,x_2,y_2| vbank()\wedge store(s_c)\wedge loc(agent,x_1,y_1)\wedge loc(s_c,x_2,y_2)\wedge \\ \nonumber 
    &\text{\quad\quad\quad} x_1 < x_2\wedge is\_safe(x_1+1,y_1)\rrbracket_b \geq \Theta_2 \rightarrow right() \\ \nonumber
    &\text{\quad\quad Else if } Pr \llbracket\exists x_1,y_1,x_2,y_2| vbank()\wedge store(s_c)\wedge loc(agent,x_1,y_1)\wedge loc(s_c,x_2,y_2)\wedge \\ \nonumber 
    &\text{\quad\quad\quad} x_1 > x_2\wedge is\_safe(x_1-1,y_1)\rrbracket_b \geq \Theta_2 \rightarrow left() \\ \nonumber
    &\text{\quad\quad Else if } Pr \llbracket\exists x_1,y_1,x_2,y_2| vbank()\wedge store(s_c)\wedge loc(agent,x_1,y_1)\wedge loc(s_c,x_2,y_2)\wedge \\ \nonumber 
    &\text{\quad\quad\quad} y_1 > y_2\wedge is\_safe(x_1,y_1-1)\rrbracket_b \geq \Theta_2 \rightarrow down() \\ \nonumber
    &\text{\quad\quad Else if } Pr \llbracket\exists x_1,y_1,x_2,y_2| vbank()\wedge store(s_c)\wedge loc(agent,x_1,y_1)\wedge loc(s_c,x_2,y_2)\wedge \\ \nonumber 
    &\text{\quad\quad\quad} y_1 < y_2\wedge is\_safe(x_1,y_1+1)\rrbracket_b \geq \Theta_2 \rightarrow up() \\ \nonumber
    &\text{\quad Else if } Pr \llbracket\exists k,x,y| \neg vbank()\wedge bank(k)\wedge loc(k,x,y)\wedge loc(agent,x,y)\rrbracket_b \geq \Theta_1 \rightarrow visit() \\ \nonumber
    &\text{\quad For } k_c\in\{k_1,...,k_m\}:\\ \nonumber
    &\text{\quad\quad Else if } Pr \llbracket\exists x_1,y_1,x_2,y_2| \neg vbank()\wedge bank(k_c)\wedge loc(agent,x_1,y_1)\wedge loc(k_c,x_2,y_2)\wedge \\ \nonumber 
    &\text{\quad\quad\quad} x_1 < x_2\wedge is\_safe(x_1+1,y_1)\rrbracket_b \geq \Theta_2 \rightarrow right() \\ \nonumber
    &\text{\quad\quad Else if } Pr \llbracket\exists x_1,y_1,x_2,y_2| \neg vbank()\wedge bank(k_c)\wedge loc(agent,x_1,y_1)\wedge loc(k_c,x_2,y_2)\wedge \\ \nonumber 
    &\text{\quad\quad\quad} x_1 > x_2\wedge is\_safe(x_1-1,y_1)\rrbracket_b \geq \Theta_2 \rightarrow left() \\ \nonumber
    &\text{\quad\quad Else if } Pr \llbracket\exists x_1,y_1,x_2,y_2| \neg vbank()\wedge bank(k_c)\wedge loc(agent,x_1,y_1)\wedge loc(k_c,x_2,y_2)\wedge \\ \nonumber 
    &\text{\quad\quad\quad} y_1 > y_2\wedge is\_safe(x_1,y_1-1)\rrbracket_b \geq \Theta_2 \rightarrow down() \\ \nonumber
    &\text{\quad\quad Else if } Pr \llbracket\exists x_1,y_1,x_2,y_2| \neg vbank()\wedge bank(k_c)\wedge loc(agent,x_1,y_1)\wedge loc(k_c,x_2,y_2)\wedge \\ \nonumber 
    &\text{\quad\quad\quad} y_1 < y_2\wedge is\_safe(x_1,y_1+1)\rrbracket_b \geq \Theta_2 \rightarrow up() \\ \nonumber
    &\text{\quad Else } scan()
\end{flalign*}

\section{Hyperparameter Optimization Algorithms Implementation}
\label{appendix:hop_implementation}

As a baseline comparison, we implemented Nelder-Mead and Particle Swarm as hyperparameter optimization algorithms to compare solving for the optimal parameter values for a parameterized BSQ policy to minimize the expected cost of the resulting BSQ policy. Both algorithms evaluate points in the parameter space to decide which areas to explore next. For both, we evaluate a parameter point by taking a thousand parallel runs of the BSQ policy with those values to approximate the expected cost.

\textbf{Nelder-Mead}\quad We used a simplex that has edges numbering one more than the number of parameters in the parameterized BSQ policy being optimized. To start with a better initial simplex, we randomly sampled a hundred points and tracked the points that had lower expected costs and were 0.4 distance away from each of the better-performing points. The closer points were saved but were given a lower priority. 
Each iteration followed the standard Nelder-Mead steps with the sum quality of all the edges in the simplex calculated. If five iterations pass without an increase in quality, the run is deemed to have converged, and the best quality point of the simplex is returned as the solution.

\textbf{Particle Swarm}\quad
Particle swarm used 10 particles randomly selected from within the parameter space with a random velocity. Let $t$ be the number of iteration steps since the last improvement in the best quality point found. For each iteration, the cognitive coefficient is $1.0-0.1t$, and the social coefficient is $0.1 + 0.1t$, which causes the particles to become more greedy as time since the last improvement increases. The momentum is statically set to 0.6 with the velocity clipped between $\pm0.5$. The location of points is also clipped to the parameter search space. If 10 iteration steps pass without seeing an improvement, the run is deemed to have converged, and the best quality point of the swarm is returned as the solution.

\section{Additional Results}
\label{appendix:additional_results}

In this section, we provide additional results from the experiments performed. This includes introducing two additional partition selection approaches we evaluated: Global Thompson Sampling and Maximum Confidence. We also provide graphs of the performance of the hypothesized optimal partition across all PRS variants. Finally, we provide a results table for all five partition selection approaches and the baseline RCompliant.

PRS is implemented for multiprocessing by having each process manage a subset of the partitions $X'\subseteq X$ but share a global hypothesis of the optimal partition. Also, a dynamic exploration rate $e_r$ is used that diminishes over the solving time. Using this framework, two additional partition selection approaches were explored.

\textbf{Maximum Confidence (PRS-Max)}\quad
We explore $e_r$ percent of the time by uniformly sampling $s\sim U_0^1$ and checking if $s\leq e_r$. If exploring, we uniformly at random select a partition from $X'$. Otherwise, the partition with maximum standard deviation, $\mathop{\arg \min}_{\langle \rho,\hat{E}[\rho]\rangle\in X'} \sigma(\hat{E})[\rho]$, is selected. 

\textbf{Global Thompson Sampling (PRS-Global)}\quad  Unlike the other partition selection approaches, each processor iterates over all partitions it manages before selecting multiple partitions to refine. Partitions are chosen for two reasons: (1) they are below the minimum number of samples, or (2) the partition has the potential of being better than the current global hypothesized optimal partition. This is simulated for each partition using $\mathcal{N}(\mu_c,\sigma_c \times e_r)$ with $\mu_c$ and $\sigma_c$ being the mean and standard deviation of that partition, respectively. If the sample taken from this normal distribution has a lower expected cost than the hypothesized optimal partition, this partition is selected for refinement.

\begin{figure*}
    \centering
    \includegraphics[width=1.0\textwidth]{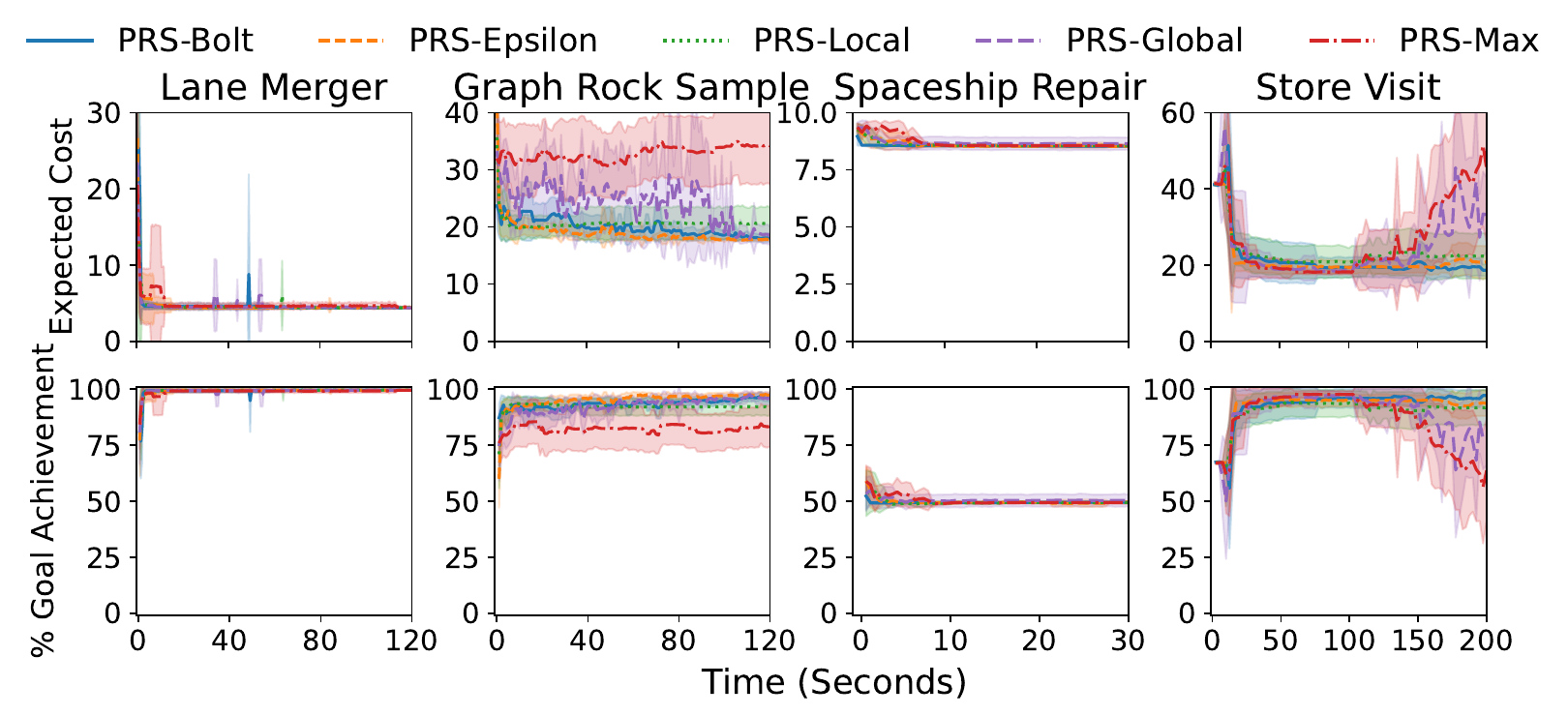}
    \caption{Performance of the hypothesized optimal partition while solving for the Lane Merger, Spaceship Repair, and Store Visit problems. Each line is the average over 10 independent runs with the standard deviation error shown.}
    \label{fig:time_graphs}
\end{figure*}

\textbf{Performance of the hypothesized optimal} In Figure\,\ref{fig:time_graphs}, the hypothesized optimal over the runtime of PRS for each partition selection approach is shown. On Lane Merger and Spaceship Repair, the performance of each PRS variant is quite similar, with the solver quickly converging to a near-optimal policy. However, PRS-Global has a much slower convergence rate due to trying to evaluate all promising partitions rather than focusing on the most promising ones. This resulted in PRS-Global not converging before timeout on Store Visit. PRS-Max is expected to perform poorly due to its poor partition-selection strategy. These results highlight that, with a competent partition-selection strategy, PRS will converge to the optimal policy that minimizes the expected cost, with the main variation being the convergence time.

\textbf{Tabulated performance} In Table\,\ref{tab:expected_cost_table} and Table\,\ref{tab:goal_rate_table}, the expected cost and the goal achievement rate have been tabulated, showing the near identical performance of four of the partition refinement approaches. The solution for Nelder-Mead and Particle Swarm are taken at PRS's timeout time to give each solver the same solving time. For all the problems, the more effective partition-selection approaches discussed in the main paper achieved equal, if not better, performance than Nelder-Mead and Particle Swarm. 

\begin{table}
	\centering
	\begin{tabular}{l||c|c|c|c}
		Problems & Lane Merger & Graph Rock Sample & Spaceship Repair & Store Visit \\\hline\hline
		PRS-Bolt & $4.39 \pm 0.04$ & $18.19 \pm 0.82$ & $8.52 \pm 0.00$ & $19.81 \pm 2.09$\\
		PRS-Epsilon & $4.40 \pm 0.04$ & $17.84 \pm 0.25$ & $8.52 \pm 0.00$ & $22.20 \pm 4.70$\\
		PRS-Global & $4.40 \pm 0.03$ & $18.47 \pm 1.72$ & $8.61 \pm 0.31$ & $38.07 \pm 13.63$\\
		PRS-Local & $4.39 \pm 0.03$ & $20.61 \pm 2.96$ & $8.57 \pm 0.05$ & $21.57 \pm 5.80$\\
		PRS-Max & $4.48 \pm 0.08$ & $34.12 \pm 7.00$ & $8.58 \pm 0.07$ & $58.16 \pm 18.33$\\\hline
		Nelder-Mead & $4.89 \pm 0.65$ & $19.56 \pm 1.44$ & $8.82 \pm 0.38$ & $22.39 \pm 8.24$\\
		Particle Swarm & $4.91 \pm 0.56$ & $21.12 \pm 2.34$ & $8.69 \pm 0.33$ & $18.95 \pm 2.42$\\\hline
		RCompliant & $22.10 \pm 15.70$ & $60.91 \pm 18.46$ & $9.97 \pm 0.77$ & $56.64 \pm 33.04$\\\hline
	\end{tabular}
	\caption{Expected cost of Partition Refinement Search, Nelder-Mead, Particle Swarm, and RCompliant on the Lane Merger, Graph Rock Sample, Spaceship Repair, and Store Visit problems. The performance was measured over ten runs to calculate the performance average and standard deviation.}
	\label{tab:expected_cost_table}
\end{table}

\begin{table}
	\centering
	\begin{tabular}{l||c|c|c|c}
		Problems & Lane Merger & Graph Rock Sample & Spaceship Repair & Store Visit \\\hline\hline
		PRS-Bolt & $99.6\% \pm 0.1\%$ & $96.0\% \pm 1.8\%$ & $49.8\% \pm 0.0\%$ & $95.6\% \pm 2.7\%$\\
		PRS-Epsilon & $99.6\% \pm 0.1\%$ & $97.3\% \pm 1.3\%$ & $49.8\% \pm 0.0\%$ & $92.0\% \pm 5.8\%$\\
		PRS-Global & $99.6\% \pm 0.0\%$ & $96.3\% \pm 2.9\%$ & $50.6\% \pm 2.6\%$ & $72.6\% \pm 16.3\%$\\
		PRS-Local & $99.6\% \pm 0.1\%$ & $92.2\% \pm 4.0\%$ & $49.3\% \pm 0.4\%$ & $92.7\% \pm 7.2\%$\\
		PRS-Max & $99.5\% \pm 0.2\%$ & $83.6\% \pm 9.6\%$ & $49.3\% \pm 0.6\%$ & $48.2\% \pm 21.3\%$\\\hline
		Nelder-Mead & $98.9\% \pm 1.2\%$ & $94.0\% \pm 2.7\%$ & $52.7\% \pm 7.9\%$ & $92.4\% \pm 10.6\%$\\
		Particle Swarm & $99.0\% \pm 1.1\%$ & $91.3\% \pm 3.5\%$ & $52.0\% \pm 5.2\%$ & $96.6\% \pm 3.5\%$\\\hline
		RCompliant & $86.9\% \pm 16\%$ & $41.3\% \pm 20.5\%$ & $48.1\% \pm 10.2\%$ & $49.8\% \pm 38.4\%$\\\hline
	\end{tabular}
	\caption{Goal achievement rate of Partition Refinement Search, Nelder-Mead, Particle Swarm, and RCompliant on the Lane Merger, Graph Rock Sample, Spaceship Repair, and Store Visit problems. The performance was measured over ten runs to calculate the performance average and standard deviation.}
	\label{tab:goal_rate_table}
\end{table}

\section{Experimental Setup And Computational Cost}
\label{appendix:experimental_setup}
In this section, we go through the empirical setup of the experiments performed in Section\,\ref{sec:empirical} and include an estimate of the computation cost for running the experiments for this paper.

\begin{table}
    \centering
    \begin{tabular}{c|c|c}
         Problem & Timeout (seconds) & Sample Rate (seconds) \\ \hline
         Lane Merger & 120 & 0.5 \\
         Graph Rock Sample & 120 & 1 \\
         Spaceship Repair & 30 & 0.125 \\
         Store Visit & 300 & 2.5 \\
    \end{tabular}
    \caption{The timeout and the sample rate of the hypothesized optimal partition for PRS for the evaluation problems.}
    \label{tab:problem_times}
\end{table}

All experiments were performed on an Intel(R) Xeon(R) W-2102 CPU @ 2.90GHz without using a GPU. The Partition Refinement Search algorithm was implemented using a manager-worker design pattern where 8 workers were initialized when solving. The manager maintained the hypothesized optimal partition and current exploration rate. Table\,\ref{tab:problem_times} shows the timeout and sample rate used for each problem for PRS. PRS was allowed to use an addition minute beyond timeout in the case the hypothesized optimal partition had less than the minimum allowed samples, however this case did not occur.

Both solutions and recorded hypothesized optimal partitions were evaluated using the same random seed to ensure that the same initial states were assessed. This evaluation process was carried out in parallel using a manager-worker design pattern with 16 workers. 25,000 independent runs were conducted for each solution to determine the expected cost and goal achievement rate. Additionally, for each recorded hypothesized optimal partition, 10,000 runs were performed. The average performance and standard deviation error were calculated by averaging the results of ten runs for each combination of problem and solver. A similar approach was used to evaluate the random-parameter user-compliant policy RCompliant. Instead of using solved policies, ten parameter value sets were uniformly selected randomly from the parameter space, and each set was evaluated for 25,000 runs. These results are presented in Figure\,\ref{fig:results}.

For constructing the Spaceship Repair heatmap  (Figure\,\ref{fig:spaceship_repair_combined}), all combinations of parameters $\Theta_1$ and $\Theta_2$ were evaluated with parameter values sampled from 0 to 1 with increments of 0.002. This produced 251,001 equally-spaced parameter values. Parameter values were evaluated on 300 runs with a horizon of 12 to calculate the expected cost.

\textbf{Computational cost}\quad
Running the Partition Refinement Search algorithm for the empirical evaluation section (Section\,\ref{sec:empirical}) involved nine processes running simultaneously for 25 minutes across ten trials for each of the five partition-selection approaches. This resulted in 1.58 hours of CPU usage when run in parallel, equivalent to 14.22 hours if executed sequentially. Evaluation complexities were significant, such as the variance in time per run and problem type. For instance, evaluating the solutions and hypothesized optimal partitions for the Lane Merger problem using 17 processes took approximately 48 hours in parallel. The overall CPU usage for the main experiment approximates to 360 hours (15 days) in parallel, translating to about 6,288 hours (262 days) if run sequentially. Additionally, constructing the Spaceship Repair heatmap (Figure,\ref{fig:spaceship_repair_combined}) required approximately 24 hours of CPU time using 11 processes. These experiments were conducted thrice, culminating in an estimated total computational cost of 2,160 hours (90 days) using an Intel(R) Xeon(R) W-2102 CPU @ 2.90GHz, or 19,656 hours (819 days) if operations were performed sequentially.  

\section{Spaceship Repair Partitions Closed Form}
\label{appendix:sr_closed_form}
In this section, we calculate the braids that partition the parameter space for the Spaceship Repair problem with the parameterized BSQ policy from Fig.\,\ref{fig:spaceship_repair_combined}. 

First, we give the exact observation model used. From Section\,\ref{sec:gPOMDP}, the Spaceship Repair state is composed of two functions: $broken(o)$ and $rlocation()$. This means each state is expressed as $\{broken(robot),broken(ship),rlocation()\}$. Additionally, the set of observations can be expressed as $\{obs\_err(robot),obs\_err(ship)\}$. Let $p_r$ and $p_s$ be the probability of the observation reflecting the actual state of the robot and spaceship, respectively. The probability of observation $o$ in state $s$ after action $a$ is executed is calculated as follows.

\begin{equation}
\begin{aligned}
    &Pr(o=\{obs\_err(robot),obs\_err(ship)\}|s=\{broken(robot),broken(ship),rlocation()\}) = \\ 
    &\begin{cases}
        p_rp_s,& \text{if } broken(robot)=obs\_err(robot) \wedge broken(ship)=obs\_error(ship)\\
        p_r(1-p_s),& \text{if } broken(robot)=obs\_err(robot) \wedge broken(ship)\neq obs\_error(ship)\\
        (1-p_r)p_s,& \text{if } broken(robot)\neq obs\_err(robot) \wedge broken(ship)= obs\_error(ship)\\
        (1-p_r)(1-p_s),& \text{otherwise}
    \end{cases}    
\end{aligned}
\label{eq:spaceship_repair_obs_model}
\end{equation}

Note observations are independent of the robot's location and actions. For clarity, we express the states as whether or not the robot and spaceship are broken, $\{broken(robot),broken(ship)\}$. This means there are four possible states depending on whether the robot and ship are broken. For ease of notation, we represent these states as $S=\{s_{TT},s_{TF},s_{FT},s_{FF}\}$, where $s_{TF}$ represents that state where the robot is broken and the spaceship is not. Similar, let the four possible observations be represented as $O=\{o_{TT},o_{TF},o_{FT},o_{FF}\}$. 

The precondition of the first rule of the Spaceship Repair problem parameterized BSQ policy is $\llbracket broken(robot)\rrbracket_b \leq \Theta_1$ (Figure\,\ref{fig:spaceship_repair_combined}). For any belief state $b$, the probability of the robot being broken is the probability of the states where that is true: $\llbracket broken(robot)\rrbracket_b=b(s_{TT})+b(s_{TF})$.

Let $\{a_1,o_1,...,a_t,o_t\}$ be an action-observation trajectory for $t$ timesteps where at each timestep an action is executed followed by an observation being observed. We can calculate the probability of the state where the robot and spaceship are broken, $s_{TT}$, as follows.

\begin{equation}
    b_t(s_{TT})=\alpha Pr(o_t|s_{TT},a_t)\sum_s\mathcal{T}(s,a_t,s_{TT})b_{t-1}(s)
    \label{eq:state_prob}
\end{equation}

Note that, due to the observations being independent of the robot's location, the observation and transition functions are independent of the action. For example, there is no action the robot can perform to change whether the robot or spaceship is broken due to the problem being to reach a broken component rather than fixing it. We can simplify Equation\,\ref{eq:state_prob} significantly as follows.

\begin{equation}
    b_t(s_{TT})= \alpha Pr(o_t|s_{TT})b_{t-1}(s_{TT})
    \label{eq:state_prob_simp}
\end{equation}

We can now rewrite Equation\,\ref{eq:state_prob_simp} by unrolling the recursion. Note that $\alpha$ is the normalization factor meaning we don't need to calculate $\alpha$ each timestep because the final normalization will factor in all these changes. Additionally, due to the probability of each initial state being uniform, we don't need to keep track of the initial belief. Also, note there exist four possible observations. Due to the commutativity of multiplication, we can rearrange to get the following. Let $c_{TT},c_{TF},c_{FT},$ and $c_{FF}$ be the counts of the number of each observation where $c_{TT}+c_{TF}+c_{FT}+c_{FF}=t$.

\begin{equation}
    b_t(s_{TT})= \alpha Pr(o_{TT}|s_{TT})^{c_{TT}}Pr(o_{TF}|s_{TT})^{c_{TF}}Pr(o_{FT}|s_{TT})^{c_{FT}}Pr(o_{FF}|s_{TT})^{c_{FF}}
    \label{eq:state_prob_unroll}
\end{equation}

Using Equation \ref{eq:spaceship_repair_obs_model}, the probability of this state can be written in terms of $p_r$ and $p_s$.

\begin{equation}
    b_t(s_{TT})= \alpha (p_rp_s)^{c_{TT}}(p_r(1-p_s))^{c_{TF}}((1-p_r)p_s)^{c_{FT}}((1-p_r)(1-p_s))^{c_{FF}}
    \label{eq:stt_prob_unfac}
\end{equation}

\begin{equation}
    b_t(s_{TT})= \alpha p_r^{c_{TT}+c_{TF}}p_s^{c_{TT}+c_{FT}}(1-p_r)^{c_{FT}+c_{FF}}(1-p_s)^{c_{TF}+c_{FF}}
    \label{eq:stt_prob}
\end{equation}

This same process can be applied to the other three states to get the equation of their likelihoods.

\begin{equation}
    b_t(s_{TF})= \alpha p_r^{c_{TT}+c_{TF}}p_s^{c_{TF}+c_{FF}}(1-p_r)^{c_{FT}+c_{FF}}(1-p_s)^{c_{TT}+c_{FT}}
    \label{eq:stf_prob}
\end{equation}
\begin{equation}
    b_t(s_{FT})= \alpha p_r^{c_{FT}+c_{FF}}p_s^{c_{TT}+c_{FT}}(1-p_r)^{c_{TT}+c_{TF}}(1-p_s)^{c_{TF}+c_{FF}}
    \label{eq:sft_prob}
\end{equation}
\begin{equation}
    b_t(s_{FF})= \alpha p_r^{c_{FT}+c_{FF}}p_s^{c_{TF}+c_{FF}}(1-p_r)^{c_{TT}+c_{TF}}(1-p_s)^{c_{TT}+c_{FF}}
    \label{eq:sff_prob}
\end{equation}

We can group the states into two groups depending on whether or not the robot is broken. By factoring we can get the following.
\begin{equation}
\begin{aligned}
    b_t&(s_{TT}) + b_t(s_{TF})= \\
    &\alpha p_r^{c_{TT}+c_{TF}}(1-p_r)^{c_{FT}+c_{FF}}[p_s^{c_{TT}+c_{FT}}(1-p_s)^{c_{TF}+c_{FF}} + p_s^{c_{TF}+c_{FF}}(1-p_s)^{c_{TT}+c_{FT}}]
    \label{eq:rob_broken}
\end{aligned}
\end{equation}
\begin{equation}
\begin{aligned}
    b_t&(s_{FT}) + b_t(s_{FF})= \\
    &\alpha p_r^{c_{FT}+c_{FF}}(1-p_r)^{c_{TT}+c_{TF}}[p_s^{c_{TT}+c_{FT}}(1-p_s)^{c_{TF}+c_{FF}} + p_s^{c_{TF}+c_{FF}}(1-p_s)^{c_{TT}+c_{FT}}]
    \label{eq:rob_not_broken}
\end{aligned}
\end{equation}

Note that in Equations \ref{eq:rob_broken} and \ref{eq:rob_not_broken} everything in the brackets is shared, which is due to the individual observations of the spaceship and robot being independent of each other. Also, for normalization, we just divide the sum of Equations \ref{eq:rob_broken} and \ref{eq:rob_not_broken}, which is equivalent to the sum probability of all states. Additionally, note that Equation \ref{eq:rob_broken} is equivalent to the BSQ precondition $\llbracket broken(robot)\rrbracket_{b_t}$. Substituting into this BSQ and simplifying we get the following.
\begin{equation}
    \llbracket broken(robot)\rrbracket_{b_t}= \frac{p_r^{c_{TT}+c_{TF}}(1-p_r)^{c_{FT}+c_{FF}}}{p_r^{c_{TT}+c_{TF}}(1-p_r)^{c_{FT}+c_{FF}} + p_r^{c_{FT}+c_{FF}}(1-p_r)^{c_{TT}+c_{TF}}}
    \label{eq:rob_bsq}
\end{equation}

Note that there are two exponent values: the number of times the robot is observed to be broken and the number it is not. Let $d_r=c_{TT}+c_{TF}-c_{FT}-c_{FF}$ be the difference in the number of times that the robot is observed to be broken to not. If $d_r>0$, then the robot has been observed to be broken more often than not. By substituting $d_r+c_{FT}c_{FF}=c_{TT}+c_{TF}$ into Equation\,\ref{eq:rob_bsq} the equation simplifies down.
\begin{equation}
    \llbracket broken(robot)\rrbracket_{b_t}= \frac{p_r^{d_r}}{p_r^{d_r} + (1-p_r)^{d_r}}
    \label{eq:rob_bsq_simp}
\end{equation}

Following a similar process, the BSQ from the second rule in the parameterized BSQ policy from Figure\,\ref{fig:spaceship_repair_combined} can be written similarly. Let $d_s$ be the difference in the number of times the spaceship is observed to be or not. If $d_s>0$, then the spaceship has been observed to be broken more often than not.
\begin{equation}
    \llbracket broken(ship)\rrbracket_{b_t}= \frac{p_s^{d_s}}{p_s^{d_s} + (1-p_s)^{d_s}}
    \label{eq:ship_bsq_simp}
\end{equation}

Note that the observation model used $p_r=0.6$ and $p_s=0.75$ for the heatmap in Figure\,\ref{fig:spaceship_repair_combined}. The horizontal thresholds can be calculated using Equation\,\ref{eq:ship_bsq_simp} and the vertical with Equation\,\ref{eq:rob_bsq_simp}. 

Therefore, a partition is specific value of $d_r \in\mathbb{Z}$ and $d_s\in\mathbb{Z}$ that is equivalent to saying:
\emph{The objective is to fix the communication channel. If the difference in the number of times the robot has been observed being broken than not is greater than $d_r$, it should try to repair itself;
otherwise, if the difference in the number of times the spaceship has been observed being broken than not is greater than $d_s$, it should try to repair that.} Formally, this partition represents the parameter space where $\frac{p_r^{d_r-1}}{p_r^{d_r-1} + (1-p_r)^{d_r-1}} \leq \Theta_1 < \frac{p_r^{d_r}}{p_r^{d_r} + (1-p_r)^{d_r}}$ and $\frac{p_s^{d_s-1}}{p_s^{d_s-1} + (1-p_s)^{d_s-1}} \leq \Theta_2 < \frac{p_s^{d_s}}{p_s^{d_s} + (1-p_s)^{d_s}}$ where all parameter value sets that satisfy both inequalities are similar. Due to $d_r$ and $d_s$ being the difference between observation counts, the set of possible partitions is finite for finite horizons.

We explored solving the Spaceship Repair problem directly using these inequalities. The belief state reflects the probability of each outcome, meaning the main challenge is calculating the average number of timesteps to reach the goal. This can be solved by finding the average length of time of the Gambler's Ruin problem. One possible direction of future work is exploring solving parameterized BSQ policies and gPOMDPs this way.

\section{Broader Impacts}
\label{append:broader_impacts}

The primary positive impact of parameterized BSQ policies is their accessibility to non-experts, allowing them to input their requirements directly into a solver that optimizes the completion of tasks while aligning with the user. Moreover, parameterized BSQ policies enable encoding safety constraints with enforceable guarantees over the belief state. Thus, this paper represents an important step in making AI more usable for non-experts, particularly in encoding constraints and preferences, while addressing safety concerns in real-world applications."

A potential negative impact of making AI more accessible through parameterized BSQ policies is that it could also be exploited by bad actors who might encode harmful preferences. To mitigate this risk, one approach is to design goals such that negative outcomes inherently prevent goal completion, thereby teaching the agent to avoid these outcomes. Additionally, future work can explore methods for prioritizing certain constraints to ensure that the AI does not align with harmful intentions. 

\section{Additional Limitations}
\label{appendix:limitations_future_work}

While we discussed in Section\,\ref{sec:conclusion} some of the limitations of this work, one additional limitation is an essential direction of future work: aligning user and problem objectives. For example, in Graph Rock Sample, if the encoded goal for the gPOMDP did not require collecting rocks but the user still wanted to collect one rock of each type using the parameterized BSQ policy in Appendix\,\ref{appendix:grs}, PRS would optimize the parameters to make it so no rocks are worth scanning or sampling to exit as fast as possible to minimize the expected cost. While this case is an obvious misalignment between the gPOMDP and parameterized BSQ policy, these misalignments can be more subtle, leading to the optimal policy not behaving as intended. Therefore, future work needs to be done to explore catching misalignments to allow the user to understand and fix them.

\newpage
\section*{NeurIPS Paper Checklist}

\begin{enumerate}

\item {\bf Claims}
    \item[] Question: Do the main claims made in the abstract and introduction accurately reflect the paper's contributions and scope?
    \item[] Answer: \answerYes{}
    \item[] Justification: The claims made in both the abstract and introduction reflect the paper where we introduced a new framework for user preferences (Section \ref{sec:formal}), performed a formal analysis of it (Section \ref{sec:theory}), introduced a piecewise constant algorithm (Section \ref{sec:algorithms}), and empirically evaluated this algorithm (Section \ref{sec:empirical}). 
    \item[] Guidelines:
    \begin{itemize}
        \item The answer NA means that the abstract and introduction do not include the claims made in the paper.
        \item The abstract and/or introduction should clearly state the claims made, including the contributions made in the paper and important assumptions and limitations. A No or NA answer to this question will not be perceived well by the reviewers. 
        \item The claims made should match theoretical and experimental results, and reflect how much the results can be expected to generalize to other settings. 
        \item It is fine to include aspirational goals as motivation as long as it is clear that these goals are not attained by the paper. 
    \end{itemize}

\item {\bf Limitations}
    \item[] Question: Does the paper discuss the limitations of the work performed by the authors?
    \item[] Answer: \answerYes{}
    \item[] Justification: Please refer to both Section \ref{sec:conclusion} and Appendix \ref{appendix:limitations_future_work}.
    \item[] Guidelines:
    \begin{itemize}
        \item The answer NA means that the paper has no limitation while the answer No means that the paper has limitations, but those are not discussed in the paper. 
        \item The authors are encouraged to create a separate "Limitations" section in their paper.
        \item The paper should point out any strong assumptions and how robust the results are to violations of these assumptions (e.g., independence assumptions, noiseless settings, model well-specification, asymptotic approximations only holding locally). The authors should reflect on how these assumptions might be violated in practice and what the implications would be.
        \item The authors should reflect on the scope of the claims made, e.g., if the approach was only tested on a few datasets or with a few runs. In general, empirical results often depend on implicit assumptions, which should be articulated.
        \item The authors should reflect on the factors that influence the performance of the approach. For example, a facial recognition algorithm may perform poorly when image resolution is low or images are taken in low lighting. Or a speech-to-text system might not be used reliably to provide closed captions for online lectures because it fails to handle technical jargon.
        \item The authors should discuss the computational efficiency of the proposed algorithms and how they scale with dataset size.
        \item If applicable, the authors should discuss possible limitations of their approach to address problems of privacy and fairness.
        \item While the authors might fear that complete honesty about limitations might be used by reviewers as grounds for rejection, a worse outcome might be that reviewers discover limitations that aren't acknowledged in the paper. The authors should use their best judgment and recognize that individual actions in favor of transparency play an important role in developing norms that preserve the integrity of the community. Reviewers will be specifically instructed to not penalize honesty concerning limitations.
    \end{itemize}

\item {\bf Theory Assumptions and Proofs}
    \item[] Question: For each theoretical result, does the paper provide the full set of assumptions and a complete (and correct) proof?
    \item[] Answer: \answerYes{}
    \item[] Justification: Refer to Appendix \ref{appendix:pwc_lemmas_proofs} and Appendix \ref{appendix:prs_proof} for the formal proofs of the lemmas and theorems defined in the paper.
    \item[] Guidelines:
    \begin{itemize}
        \item The answer NA means that the paper does not include theoretical results. 
        \item All the theorems, formulas, and proofs in the paper should be numbered and cross-referenced.
        \item All assumptions should be clearly stated or referenced in the statement of any theorems.
        \item The proofs can either appear in the main paper or the supplemental material, but if they appear in the supplemental material, the authors are encouraged to provide a short proof sketch to provide intuition. 
        \item Inversely, any informal proof provided in the core of the paper should be complemented by formal proofs provided in appendix or supplemental material.
        \item Theorems and Lemmas that the proof relies upon should be properly referenced. 
    \end{itemize}

    \item {\bf Experimental Result Reproducibility}
    \item[] Question: Does the paper fully disclose all the information needed to reproduce the main experimental results of the paper to the extent that it affects the main claims and/or conclusions of the paper (regardless of whether the code and data are provided or not)?
    \item[] Answer: \answerYes{}
    \item[] Justification: Both the code has been provided in the supplementary material and the detailed methodology can be found in Appendix \ref{appendix:experimental_setup}.
    \item[] Guidelines:
    \begin{itemize}
        \item The answer NA means that the paper does not include experiments.
        \item If the paper includes experiments, a No answer to this question will not be perceived well by the reviewers: Making the paper reproducible is important, regardless of whether the code and data are provided or not.
        \item If the contribution is a dataset and/or model, the authors should describe the steps taken to make their results reproducible or verifiable. 
        \item Depending on the contribution, reproducibility can be accomplished in various ways. For example, if the contribution is a novel architecture, describing the architecture fully might suffice, or if the contribution is a specific model and empirical evaluation, it may be necessary to either make it possible for others to replicate the model with the same dataset, or provide access to the model. In general. releasing code and data is often one good way to accomplish this, but reproducibility can also be provided via detailed instructions for how to replicate the results, access to a hosted model (e.g., in the case of a large language model), releasing of a model checkpoint, or other means that are appropriate to the research performed.
        \item While NeurIPS does not require releasing code, the conference does require all submissions to provide some reasonable avenue for reproducibility, which may depend on the nature of the contribution. For example
        \begin{enumerate}
            \item If the contribution is primarily a new algorithm, the paper should make it clear how to reproduce that algorithm.
            \item If the contribution is primarily a new model architecture, the paper should describe the architecture clearly and fully.
            \item If the contribution is a new model (e.g., a large language model), then there should either be a way to access this model for reproducing the results or a way to reproduce the model (e.g., with an open-source dataset or instructions for how to construct the dataset).
            \item We recognize that reproducibility may be tricky in some cases, in which case authors are welcome to describe the particular way they provide for reproducibility. In the case of closed-source models, it may be that access to the model is limited in some way (e.g., to registered users), but it should be possible for other researchers to have some path to reproducing or verifying the results.
        \end{enumerate}
    \end{itemize}

\item {\bf Open access to data and code}
    \item[] Question: Does the paper provide open access to the data and code, with sufficient instructions to faithfully reproduce the main experimental results, as described in supplemental material?
    \item[] Answer: \answerYes{}
    \item[] Justification: The code used for this paper has been provided in the supplementary material.
    \item[] Guidelines:
    \begin{itemize}
        \item The answer NA means that paper does not include experiments requiring code.
        \item Please see the NeurIPS code and data submission guidelines (\url{https://nips.cc/public/guides/CodeSubmissionPolicy}) for more details.
        \item While we encourage the release of code and data, we understand that this might not be possible, so “No” is an acceptable answer. Papers cannot be rejected simply for not including code, unless this is central to the contribution (e.g., for a new open-source benchmark).
        \item The instructions should contain the exact command and environment needed to run to reproduce the results. See the NeurIPS code and data submission guidelines (\url{https://nips.cc/public/guides/CodeSubmissionPolicy}) for more details.
        \item The authors should provide instructions on data access and preparation, including how to access the raw data, preprocessed data, intermediate data, and generated data, etc.
        \item The authors should provide scripts to reproduce all experimental results for the new proposed method and baselines. If only a subset of experiments are reproducible, they should state which ones are omitted from the script and why.
        \item At submission time, to preserve anonymity, the authors should release anonymized versions (if applicable).
        \item Providing as much information as possible in supplemental material (appended to the paper) is recommended, but including URLs to data and code is permitted.
    \end{itemize}

\item {\bf Experimental Setting/Details}
    \item[] Question: Does the paper specify all the training and test details (e.g., data splits, hyperparameters, how they were chosen, type of optimizer, etc.) necessary to understand the results?
    \item[] Answer: \answerYes{}
    \item[] Justification: Refer to Appendix \ref{appendix:experimental_setup} for a detailed methodology.
    \item[] Guidelines:
    \begin{itemize}
        \item The answer NA means that the paper does not include experiments.
        \item The experimental setting should be presented in the core of the paper to a level of detail that is necessary to appreciate the results and make sense of them.
        \item The full details can be provided either with the code, in appendix, or as supplemental material.
    \end{itemize}

\item {\bf Experiment Statistical Significance}
    \item[] Question: Does the paper report error bars suitably and correctly defined or other appropriate information about the statistical significance of the experiments?
    \item[] Answer: \answerYes{}
    \item[] Justification: In both Section \ref{sec:empirical} and Appendix \ref{appendix:additional_results}, all shown results are shown with standard deviation error. Additionally, we make it clear in both sections that we are using standard deviation error.
    \item[] Guidelines:
    \begin{itemize}
        \item The answer NA means that the paper does not include experiments.
        \item The authors should answer "Yes" if the results are accompanied by error bars, confidence intervals, or statistical significance tests, at least for the experiments that support the main claims of the paper.
        \item The factors of variability that the error bars are capturing should be clearly stated (for example, train/test split, initialization, random drawing of some parameter, or overall run with given experimental conditions).
        \item The method for calculating the error bars should be explained (closed form formula, call to a library function, bootstrap, etc.)
        \item The assumptions made should be given (e.g., Normally distributed errors).
        \item It should be clear whether the error bar is the standard deviation or the standard error of the mean.
        \item It is OK to report 1-sigma error bars, but one should state it. The authors should preferably report a 2-sigma error bar than state that they have a 96\% CI, if the hypothesis of Normality of errors is not verified.
        \item For asymmetric distributions, the authors should be careful not to show in tables or figures symmetric error bars that would yield results that are out of range (e.g. negative error rates).
        \item If error bars are reported in tables or plots, The authors should explain in the text how they were calculated and reference the corresponding figures or tables in the text.
    \end{itemize}

\item {\bf Experiments Compute Resources}
    \item[] Question: For each experiment, does the paper provide sufficient information on the computer resources (type of compute workers, memory, time of execution) needed to reproduce the experiments?
    \item[] Answer: \answerYes{}
    \item[] Justification: Refer to Appendix \ref{appendix:experimental_setup}.
    \item[] Guidelines:
    \begin{itemize}
        \item The answer NA means that the paper does not include experiments.
        \item The paper should indicate the type of compute workers CPU or GPU, internal cluster, or cloud provider, including relevant memory and storage.
        \item The paper should provide the amount of compute required for each of the individual experimental runs as well as estimate the total compute. 
        \item The paper should disclose whether the full research project required more compute than the experiments reported in the paper (e.g., preliminary or failed experiments that didn't make it into the paper). 
    \end{itemize}
    
\item {\bf Code Of Ethics}
    \item[] Question: Does the research conducted in the paper conform, in every respect, with the NeurIPS Code of Ethics \url{https://neurips.cc/public/EthicsGuidelines}?
    \item[] Answer: \answerYes{} 
    \item[] Justification: We have reviewed and can confirm our research conforms to NeurIPS Code of Ethics.
    \item[] Guidelines:
    \begin{itemize}
        \item The answer NA means that the authors have not reviewed the NeurIPS Code of Ethics.
        \item If the authors answer No, they should explain the special circumstances that require a deviation from the Code of Ethics.
        \item The authors should make sure to preserve anonymity (e.g., if there is a special consideration due to laws or regulations in their jurisdiction).
    \end{itemize}

\item {\bf Broader Impacts}
    \item[] Question: Does the paper discuss both potential positive societal impacts and negative societal impacts of the work performed?
    \item[] Answer: \answerYes{}
    \item[] Justification: Refer to Appendix \ref{append:broader_impacts}.
    \item[] Guidelines:
    \begin{itemize}
        \item The answer NA means that there is no societal impact of the work performed.
        \item If the authors answer NA or No, they should explain why their work has no societal impact or why the paper does not address societal impact.
        \item Examples of negative societal impacts include potential malicious or unintended uses (e.g., disinformation, generating fake profiles, surveillance), fairness considerations (e.g., deployment of technologies that could make decisions that unfairly impact specific groups), privacy considerations, and security considerations.
        \item The conference expects that many papers will be foundational research and not tied to particular applications, let alone deployments. However, if there is a direct path to any negative applications, the authors should point it out. For example, it is legitimate to point out that an improvement in the quality of generative models could be used to generate deepfakes for disinformation. On the other hand, it is not needed to point out that a generic algorithm for optimizing neural networks could enable people to train models that generate Deepfakes faster.
        \item The authors should consider possible harms that could arise when the technology is being used as intended and functioning correctly, harms that could arise when the technology is being used as intended but gives incorrect results, and harms following from (intentional or unintentional) misuse of the technology.
        \item If there are negative societal impacts, the authors could also discuss possible mitigation strategies (e.g., gated release of models, providing defenses in addition to attacks, mechanisms for monitoring misuse, mechanisms to monitor how a system learns from feedback over time, improving the efficiency and accessibility of ML).
    \end{itemize}
    
\item {\bf Safeguards}
    \item[] Question: Does the paper describe safeguards that have been put in place for responsible release of data or models that have a high risk for misuse (e.g., pretrained language models, image generators, or scraped datasets)?
    \item[] Answer: \answerNA{}.
    \item[] Justification: This paper poses no risk of being misused.
    \item[] Guidelines:
    \begin{itemize}
        \item The answer NA means that the paper poses no such risks.
        \item Released models that have a high risk for misuse or dual-use should be released with necessary safeguards to allow for controlled use of the model, for example by requiring that users adhere to usage guidelines or restrictions to access the model or implementing safety filters. 
        \item Datasets that have been scraped from the Internet could pose safety risks. The authors should describe how they avoided releasing unsafe images.
        \item We recognize that providing effective safeguards is challenging, and many papers do not require this, but we encourage authors to take this into account and make a best faith effort.
    \end{itemize}

\item {\bf Licenses for existing assets}
    \item[] Question: Are the creators or original owners of assets (e.g., code, data, models), used in the paper, properly credited and are the license and terms of use explicitly mentioned and properly respected?
    \item[] Answer: \answerNA{}
    \item[] Justification: This paper does not use existing assets.
    \item[] Guidelines:
    \begin{itemize}
        \item The answer NA means that the paper does not use existing assets.
        \item The authors should cite the original paper that produced the code package or dataset.
        \item The authors should state which version of the asset is used and, if possible, include a URL.
        \item The name of the license (e.g., CC-BY 4.0) should be included for each asset.
        \item For scraped data from a particular source (e.g., website), the copyright and terms of service of that source should be provided.
        \item If assets are released, the license, copyright information, and terms of use in the package should be provided. For popular datasets, \url{paperswithcode.com/datasets} has curated licenses for some datasets. Their licensing guide can help determine the license of a dataset.
        \item For existing datasets that are re-packaged, both the original license and the license of the derived asset (if it has changed) should be provided.
        \item If this information is not available online, the authors are encouraged to reach out to the asset's creators.
    \end{itemize}

\item {\bf New Assets}
    \item[] Question: Are new assets introduced in the paper well documented and is the documentation provided alongside the assets?
    \item[] Answer: \answerYes{}
    \item[] Justification: Documentation on the code used in this paper is provided with the code.
    \item[] Guidelines:
    \begin{itemize}
        \item The answer NA means that the paper does not release new assets.
        \item Researchers should communicate the details of the dataset/code/model as part of their submissions via structured templates. This includes details about training, license, limitations, etc. 
        \item The paper should discuss whether and how consent was obtained from people whose asset is used.
        \item At submission time, remember to anonymize your assets (if applicable). You can either create an anonymized URL or include an anonymized zip file.
    \end{itemize}

\item {\bf Crowdsourcing and Research with Human Subjects}
    \item[] Question: For crowdsourcing experiments and research with human subjects, does the paper include the full text of instructions given to participants and screenshots, if applicable, as well as details about compensation (if any)? 
    \item[] Answer: \answerNA{}
    \item[] Justification: This paper does not involve crowdsourcing nor research with human subjects.
    \item[] Guidelines:
    \begin{itemize}
        \item The answer NA means that the paper does not involve crowdsourcing nor research with human subjects.
        \item Including this information in the supplemental material is fine, but if the main contribution of the paper involves human subjects, then as much detail as possible should be included in the main paper. 
        \item According to the NeurIPS Code of Ethics, workers involved in data collection, curation, or other labor should be paid at least the minimum wage in the country of the data collector. 
    \end{itemize}

\item {\bf Institutional Review Board (IRB) Approvals or Equivalent for Research with Human Subjects}
    \item[] Question: Does the paper describe potential risks incurred by study participants, whether such risks were disclosed to the subjects, and whether Institutional Review Board (IRB) approvals (or an equivalent approval/review based on the requirements of your country or institution) were obtained?
    \item[] Answer: \answerNA{}
    \item[] Justification: This paper does not involve crowdsourcing nor research with human subjects.
    \item[] Guidelines:
    \begin{itemize}
        \item The answer NA means that the paper does not involve crowdsourcing nor research with human subjects.
        \item Depending on the country in which research is conducted, IRB approval (or equivalent) may be required for any human subjects research. If you obtained IRB approval, you should clearly state this in the paper. 
        \item We recognize that the procedures for this may vary significantly between institutions and locations, and we expect authors to adhere to the NeurIPS Code of Ethics and the guidelines for their institution. 
        \item For initial submissions, do not include any information that would break anonymity (if applicable), such as the institution conducting the review.
    \end{itemize}

\end{enumerate}

\end{document}